\documentclass{article} 
\usepackage{iclr2027_conference,times}

\usepackage{amsmath,amsfonts,bm}

\def\eqref#1{equation~\ref{#1}}

\def\1{\bm{1}}

\DeclareMathAlphabet{\mathsfit}{\encodingdefault}{\sfdefault}{m}{sl}
\SetMathAlphabet{\mathsfit}{bold}{\encodingdefault}{\sfdefault}{bx}{n}

\newcommand{\R}{\mathbb{R}}

\usepackage{hyperref}
\usepackage{url}
\usepackage{amsthm}
\usepackage{amssymb}
\usepackage{mathtools}
\usepackage{enumitem}
\usepackage{booktabs}
\usepackage{algorithm}
\usepackage{algpseudocode}
\usepackage[table]{xcolor}
\newtheorem{definition}{Definition}
\newtheorem{proposition}{Proposition}

\newtheorem{theorem}{Theorem}
\newtheorem{corollary}{Corollary}
\definecolor{VisionHOPEGreen}{HTML}{E7F1E9}

\usepackage{multirow}
\title{\centering VisionHOPE: Visual Backbones as Self-Modifying Learning Systems}

\author{%
\parbox[t]{\dimexpr\textwidth-2\tabcolsep\relax}{
\centering
\normalfont
{\small
  \textbf{Siran Peng}\textsuperscript{2,3,$*$,$\ddagger$}\quad
  \textbf{Tianshuo Zhang}\textsuperscript{3,2,$*$}\quad
  \textbf{Tianyu Fu}\textsuperscript{1}\quad
  \textbf{Weisong Zhao}\textsuperscript{2}\quad
  \textbf{Haoyuan Zhang}\textsuperscript{3,2}
}\\
{\small
  \textbf{Jiankuo Zhao}\textsuperscript{2,3}\quad
  \textbf{Minghui Wu}\textsuperscript{1}\quad
  \textbf{Ping Jiang}\textsuperscript{1}\quad
  \textbf{Xiangyu Zhu}\textsuperscript{2,3}\quad
  \textbf{Chenxu Zhao}\textsuperscript{1,$\dagger$}\quad
  \textbf{Zhen Lei}\textsuperscript{2,3,4,$\dagger$} 
}\\
  \textnormal{\footnotesize \textsuperscript{1}Mininglamp Technology \quad \textsuperscript{2}MAIS, CASIA \quad \textsuperscript{3}SAI, UCAS \quad \textsuperscript{4}SCSE, FIE, M.U.S.T.} \\
  \texttt{\footnotesize \textnormal{\{pengsiran2023, zhangtianshuo2022, zhen.lei\}@ia.ac.cn, zhaochenxu@mininglamp.com}} \\
  \textnormal{\footnotesize \textsuperscript{$*$}Equal contribution. \quad \textsuperscript{$\dagger$}Corresponding authors.\quad \textsuperscript{$\ddagger$}Work done while interning at Mininglamp.}
}
}

\iclrfinalcopy 
\begin{document}

\maketitle
\pagestyle{plain}
\thispagestyle{plain}

\begin{abstract}
Visual backbones have evolved from Convolutional Neural Networks (CNNs) with local aggregation to Vision Transformers (ViTs) with global interactions, State-Space Models (SSMs) with input-dependent state transitions, and Test-Time Training (TTT) layers that adapt an inner learner while processing an image. Across this progression, visual computation has become increasingly adaptive to each input, yet the rules governing that adaptation remain largely prescribed by the trained backbone. We introduce VisionHOPE, the first generic visual backbone formulated as a self-modifying learning system, in which what the model remembers and how it learns co-evolve within an image. Building on the self-referential construction of Nested Learning (NL), VisionHOPE realizes this co-evolution through five coupled memories that store content, generate key and value representations, and govern learning rate and retention. These memories evolve jointly as visual context accumulates along each scan. However, directly applying the unconstrained self-referential update to a visual backbone leads to instability. We therefore derive a stability-matched step-size control scheme that combines a soft cap on self-referential injection with a spectral clamp on the retained memory transition, and prove that the resulting memory dynamics are non-expansive along each scan. For two-dimensional feature maps, we adapt NL's chunk formulation by aligning chunks with image rows and columns across four directional scans. The proposed VisionHOPE achieves competitive results on ImageNet-1K, COCO, and ADE20K, establishing self-modifying learning systems as a practical foundation for general-purpose visual backbones. The code is available at \href{https://github.com/PSRben/VisionHOPE}{this url}.
\end{abstract}

\section{Introduction}
\label{sec:introduction}
Visual backbone design has repeatedly advanced by changing how information is aggregated and propagated across spatial locations. Convolutional Neural Networks (CNNs) use shared local kernels, introducing locality and translation equivariance~\citep{NIPS2012_c399862d,He_2016_CVPR}, while modern ConvNets show that redesigned convolutional architectures remain highly competitive~\citep{Liu_2022_CVPR,Ding_2024_CVPR,Yu_2025_CVPR}. Vision Transformers (ViTs) instead use content-dependent self-attention, allowing each token to interact directly with global visual context~\citep{NIPS2017_3f5ee243,dosovitskiy2020image}. Subsequent work has introduced hierarchical architectures, refined spatial aggregation, improved attention efficiency, and revisited ViT block design~\citep{9710580,Fan_2024_CVPR,11298398,wang2026vit}. However, self-attention has quadratic complexity in token count, making high-resolution processing increasingly costly.

\begin{figure}[t]
\centering
\includegraphics[width=0.93\linewidth]{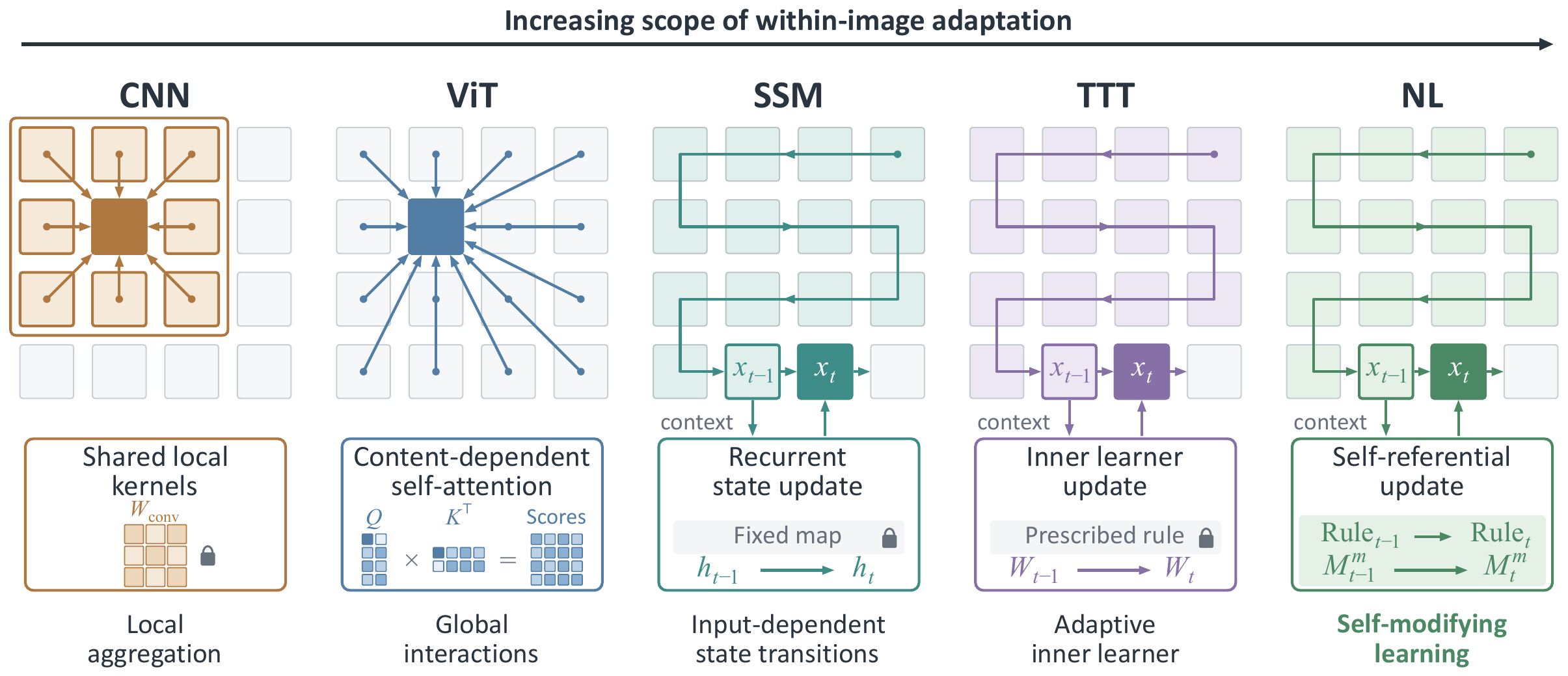}
\caption{Progression of within-image adaptation from local aggregation to self-modifying learning.}
\label{fig:visual_adaptation_progression}
\end{figure}

State-Space Models (SSMs) provide a recurrent alternative~\citep{gu2021efficiently}. Mamba makes its state transition input-dependent, allowing it to selectively propagate or forget information with linear sequence-length scaling~\citep{gu2023mamba}. Visual adaptations develop scanning strategies to accommodate the two-dimensional structure of images~\citep{pmlr-v235-zhu24f,NEURIPS2024_baa2da9a,huang2024localmamba} and incorporate attention modules to improve performance~\citep{Hatamizadeh_2025_CVPR}. Together, these advances establish SSMs as a strong family of visual backbones. Nevertheless, their transition-generation rule remains fixed: each token determines a transition, but the token-to-transition mapping itself does not evolve within the image.
Test-Time Training (TTT) goes further by using an inner learner as its recurrent state and updating it through self-supervised learning~\citep{pmlr-v267-sun25h}. In vision, ViT$^3$ studies full-image inner adaptation~\citep{Han_2026_CVPR}, whereas Vision-TTT adapts the learner recurrently along visual token sequences~\citep{kong2026vision}. These methods show that TTT layers offer an alternative to attention and conventional recurrent layers.

Figure~\ref{fig:visual_adaptation_progression} summarizes this progression: from CNNs to TTT, computation becomes increasingly adaptive to each input, yet the rules governing that adaptation remain prescribed by the trained backbone. This raises a question: \textbf{Can a visual backbone modify not only what it remembers, but also how it learns while processing an image?} We address this question through the self-referential principle of Nested Learning (NL)~\citep{NEURIPS2025_4309616a}. NL describes a model as a collection of interconnected learning processes, each compressing its context flow into an internal state. Its self-referential construction couples memories that store content, generate the key and value representations used for updates, and govern learning rate and retention. From this perspective, a backbone can become a learning system whose stored content and learning rule co-evolve with accumulating visual context.

Based on this principle, we introduce VisionHOPE, the first generic visual backbone formulated as a self-modifying learning system. VisionHOPE couples five types of within-image memory: content, key, value, learning rate, and retention. As visual context accumulates along each scan, the content memory co-evolves with the memories that determine its updates. VisionHOPE therefore modifies not only what it remembers but also how it learns. This distinguishes it from TTT-based visual backbones, which adapt an inner learner while leaving its representation maps and update rule largely outer-parameterized.
Directly applying NL's unconstrained self-referential update to a visual backbone, however, creates a stability barrier. Under this self-referential construction, the memories generate the quantities governing their own subsequent updates, creating a feedback loop in which expansive memory transitions can compound and destabilize training. We derive a stability-matched step-size control scheme that combines a soft cap on self-referential injection with a spectral clamp on the retained memory transition. We prove that these controls yield complementary bounds that jointly guarantee non-expansive memory dynamics along each scan.
To process two-dimensional feature maps, VisionHOPE adapts NL's chunk formulation~\citep{NEURIPS2025_4309616a} to four directional scans~\citep{NEURIPS2024_baa2da9a} by aligning chunks with image rows and columns. Forward and reverse row-major scans use row-aligned chunks, while forward and reverse column-major scans use column-aligned chunks. Each direction maintains independent states, and the directional outputs are fused channel-wise. The recurrent computation scales linearly with visual token count, while the chunk formulation facilitates parallel computation of token-dependent quantities. Our contributions are:
\begin{enumerate}[leftmargin=1.5em,labelsep=0.5em,itemsep=0.35em,topsep=0.35em]
\item We introduce VisionHOPE, the first generic visual backbone formulated as a self-modifying learning system. Its content, key, value, learning-rate, and retention memories co-evolve along visual scans, allowing the backbone to modify both what it remembers and how it learns.
\item We identify the unconstrained self-referential update as a stability barrier in vision and derive a stability-matched step-size control scheme. Combining a soft cap on self-referential injection with a spectral clamp on the retained transition, it guarantees non-expansive memory dynamics.
\item We adapt NL's chunk formulation to four directional scans and instantiate the resulting VisionHOPE operator in hierarchical and plain backbones. Across model scales, VisionHOPE achieves competitive results on ImageNet-1K~\citep{deng2009imagenet}, COCO~\citep{lin2014microsoft}, and ADE20K~\citep{Zhou_2017_CVPR}, demonstrating its potential as a general-purpose visual backbone.
\end{enumerate}

\section{Methodology}
\label{sec:method}
VisionHOPE instantiates NL's self-referential principle as a stabilized Self-Referential Nested Learning (SRNL) module. We first review the theoretical foundations of SRNL, then introduce the stability-matched step-size control and prove non-expansion for the resulting token-wise and chunk-wise recurrences. Finally, we apply independent SRNL instances along four directional scans with spatially aligned chunks and embed the resulting VisionHOPE operator in a standard residual block.

\subsection{Nested Learning Foundations}
\label{sec:nl_preliminaries}
Nested Learning (NL) describes a model as a collection of interconnected learning processes, each compressing its own context flow into an internal state~\citep{NEURIPS2025_4309616a}. To establish the theoretical foundations of VisionHOPE, we first review associative memory and Delta Gradient Descent (DGD), then describe NL's self-referential construction and the resulting coupled memory updates. Finally, we present the chunk-wise formulation for efficient computation of these updates.

\subsubsection{Associative Memory and Delta Gradient Descent}
\label{sec:dgd}

\paragraph{Associative memory.}
Let $\mathcal K=\{k_i\}_{i=1}^{n}$ and $\mathcal V=\{v_i\}_{i=1}^{n}$ denote paired keys and values, with $k_i\in\R^{d_k}$ and $v_i\in\R^{d_v}$. Here, $n$ is the number of associations, while $d_k$ and $d_v$ are the respective key and value dimensions. Following NL, an associative memory is a parameterized map $M:\R^{d_k}\rightarrow\R^{d_v}$ that compresses these associations into its parameters. Given an internal objective $\widetilde{\mathcal L}$ that measures how well $M$ maps each key to its paired value, learning the memory is formulated as follows:
\begin{equation}
M^\star = \arg\min_M\widetilde{\mathcal L}\left(M(\mathcal K);\mathcal V\right),
\label{eq:nl_associative_memory}
\end{equation}
where $M(\mathcal K)=\{M(k_i)\}_{i=1}^{n}$ denotes predictions for all keys, and $M^\star$ is the optimized memory. The parameters of $M$ form its memory state, whereas optimizing this objective is the associated learning process. The keys and values are not restricted to input tokens and may represent data samples, gradients, intermediate representations, subsequences, or other elements of a context flow.

\paragraph{Gradient descent as associative memory.}
NL observes that gradient descent itself can be interpreted as an associative-memory process. To instantiate the general memory $M$ in Equation~(\ref{eq:nl_associative_memory}), set $d_k=d_v=d$ and consider a linear associative memory $M_W:\R^d\rightarrow\R^d$. The memory is parameterized by $W\in\R^{d\times d}$ and maps a key $k$ as $M_W(k)=Wk$. At online step $t$ in the context flow, $W_{t-1}$ denotes the memory state after the preceding $t-1$ elements, and $k_t\in\R^d$ denotes the current key. The memory response is $z_t=M_{W_{t-1}}(k_t)=W_{t-1}k_t\in\R^d$. Let $\mathcal J_t:\R^d\rightarrow\R$ be a differentiable local objective on the memory response. Its gradient at $z_t$ defines the output-space learning signal $g_t=\left.\nabla_z\mathcal J_t(z)\right|_{z=z_t}\in\R^d$. By the chain rule, the gradient with respect to the memory parameters is
\begin{equation}
\left.\nabla_W\mathcal J_t(Wk_t)\right|_{W=W_{t-1}}=g_tk_t^\top.
\label{eq:nl_matrix_gradient}
\end{equation}
Here, $\nabla_W$ denotes differentiation with respect to $W$, $(\cdot)^\top$ denotes transpose, and $g_tk_t^\top\in\R^{d\times d}$ is a rank-one outer product. Ordinary gradient descent then updates the memory as
\begin{equation}
W_t^{\mathrm{GD}}=W_{t-1}-\eta_tg_tk_t^\top,\qquad \eta_t>0,
\label{eq:nl_gd_update}
\end{equation}
where $W_t^{\mathrm{GD}}$ is the updated memory and $\eta_t$ is the learning rate at the current step. The rank-one term $-\eta_tg_tk_t^\top$ admits an associative-memory interpretation: $k_t$ specifies the write address, while $-g_t$ provides the write signal. NL equivalently formulates this update as the solution to a proximal problem that combines a dot-product mapping objective with a quadratic penalty on changes to the previous memory state. Appendix~\ref{app:gd_associative_memory} gives the derivation. In this view, $W$ serves as the memory state, while gradient descent is the learning process that compresses input-dependent learning signals into it. The dot-product formulation yields an additive rank-one write with no additional correction based on the current response $W_{t-1}k_t$. NL argues that such an additional correction is useful for context flows with highly correlated elements, motivating DGD as an alternative learning rule.

\paragraph{Delta Gradient Descent.}
NL introduces DGD to incorporate an explicit correction based on the memory response at the current key. It uses the write signal $u_t=-g_t$ as the regression target and replaces the dot-product mapping objective with an auxiliary $L_2$ regression objective. We denote the proximal parameter by $\lambda_t>0$ to distinguish it from the effective step used in the resulting recurrence. With $k_t$ and $u_t$ held fixed, the updated memory $W_t^{\mathrm{DGD}}$ is defined as follows:
\begin{equation}
W_t^{\mathrm{DGD}}=\arg\min_{W\in\R^{d\times d}}\left[\frac{1}{2}\left\lVert Wk_t-u_t\right\rVert_2^2+\frac{1}{2\lambda_t}\left\lVert W-W_{t-1}\right\rVert_F^2\right],\qquad \lambda_t>0.
\label{eq:nl_dgd_proximal_objective}
\end{equation}
The first term measures the squared regression error between the candidate response $Wk_t$ and the write target $u_t$ at the current key, while the second penalizes changes from the previous memory state $W_{t-1}$. Solving this proximal problem gives the following effective DGD step size
\begin{equation}
\eta_t=\frac{\lambda_t}{1+\lambda_t\lVert k_t\rVert_2^2},
\label{eq:nl_dgd_effective_step}
\end{equation}
which reduces to $\eta_t=\lambda_t/(1+\lambda_t)$ for unit-norm keys. The resulting closed-form recurrence is
\begin{equation}
W_t^{\mathrm{DGD}}=W_{t-1}-\eta_t\left(W_{t-1}k_t-u_t\right)k_t^\top=W_{t-1}\left(I_d-\eta_tk_tk_t^\top\right)-\eta_tg_tk_t^\top.
\label{eq:nl_dgd_update}
\end{equation}
Here, $I_d\in\R^{d\times d}$ is the identity matrix. Appendix~\ref{app:dgd_proximal_derivation} provides the derivation. Compared with ordinary gradient descent, DGD adds the key-dependent correction $-\eta_tW_{t-1}k_tk_t^\top$. Equivalently, it writes the residual $u_t-W_{t-1}k_t$, explicitly correcting the current write signal using what the memory already stores at the current key. This provides a mechanism for revising existing associations as successive, correlated context elements arrive. The self-referential construction of NL augments this DGD recurrence with a context-dependent retention factor denoted by $\alpha_t\in[0,1)$:
\begin{equation}
W_t=W_{t-1}\left(\alpha_tI_d-\eta_tk_tk_t^\top\right)-\eta_tg_tk_t^\top.
\label{eq:dgd_with_retention}
\end{equation}
Here, $\alpha_t$ controls how much of the previous memory is retained at each step of the recurrence, and $\eta_t$ scales both the key-dependent correction and the current gradient write. This retained DGD recurrence is the inner learning rule applied to the coupled memories introduced next.

\subsubsection{Self-Referential Nested Learning}
\label{sec:self_referential_learning}
The retained DGD recurrence in Equation~(\ref{eq:dgd_with_retention}) specifies how a memory is updated at each online step given its current key, learning signal, step size, and retention factor. NL makes this process self-referential by using evolving memories to generate the update quantities and allowing each memory to produce its own target. NL permits arbitrary memory architectures. In the original HOPE architecture in NL, these memories are instantiated as Multi-Layer Perceptrons (MLPs), and their mappings are learned with an $L_2$ regression objective. The corresponding $L_2$ recurrence, however, is derived explicitly only for matrix-valued linear memories. VisionHOPE accordingly adopts this linear $L_2$ recurrence for memory updates, yielding the five-memory SRNL module used throughout the model. For an ordered visual context, let $x_t\in\R^d$ denote the token at step $t$, where $d$ is the input dimension of an SRNL instance. The self-referential state contains the following five memories:
\begin{equation}
\mathcal S_t=\left\{M^m_t,M^k_t,M^v_t,m^\eta_t,m^\alpha_t\right\}.
\label{eq:nl_self_referential_state}
\end{equation}
Here, $M^m_t\in\R^{d\times d}$ stores content, $M^k_t,M^v_t\in\R^{d\times d}$ generate key and value representations, and $m^\eta_t,m^\alpha_t\in\R^{1\times d}$ govern learning rate and retention. For compactness, $M^\square_t$ denotes any of these states for $\square\in\{m,k,v,\eta,\alpha\}$, with $M^\eta_t=m^\eta_t$ and $M^\alpha_t=m^\alpha_t$. At step $t$, the update quantities are generated from the preceding memory states using the current token $x_t$:
\begin{equation}
k_t=M^k_{t-1}x_t,\qquad v_t=M^v_{t-1}x_t,\qquad \eta_t=\phi_\eta\left(m^\eta_{t-1}x_t\right),\qquad \alpha_t=\phi_\alpha\left(m^\alpha_{t-1}x_t\right).
\label{eq:nl_self_referential_quantities}
\end{equation}
Here, $k_t,v_t\in\R^d$ are the key and value representations. The functions $\phi_\eta:\R\rightarrow\R_{>0}$ and $\phi_\alpha:\R\rightarrow[0,1)$ map the scalar memory outputs to a positive DGD step $\eta_t$ and a retention factor $\alpha_t$. Their forms are specified in Appendix~\ref{app:model_configurations}. The learning-rate memory directly parameterizes $\eta_t$, while $\lambda_t$ is used only in the preceding proximal derivation. Self-reference enters through each memory's update target. Rather than sharing $v_t$ as the target, memory $\square$ transforms it using its current state:
\begin{equation}
\widehat v^\square_t=M^\square_{t-1}v_t,\qquad \square\in\{m,k,v,\eta,\alpha\}.
\label{eq:nl_self_generated_target}
\end{equation}
Each memory uses its self-generated target $\widehat v^\square_t$ to define a learning signal at the current key $k_t$.  Following the output-space formulation in Section~\ref{sec:dgd}, we define the local $L_2$ regression objective
\begin{equation}
\mathcal J^\square_t(z)=\frac{1}{2}\left\lVert z-\widehat v^\square_t\right\rVert_2^2,
\label{eq:nl_self_referential_objective}
\end{equation}
where $z$ denotes the memory response. Both the response and target are vectors for matrix memories and scalars for row-vector memories. Holding the target fixed and evaluating the gradient with respect to $z$ at the current response gives the output-space learning signal for each memory
\begin{equation}
g^\square_t=\left.\nabla_z\mathcal J^\square_t(z)\right|_{z=M^\square_{t-1}k_t}
=M^\square_{t-1}k_t-\widehat v^\square_t.
\label{eq:nl_self_referential_signal}
\end{equation}
Substituting this signal into retained DGD gives the Self-Referential DGD (SR-DGD) update
\begin{equation}
M^\square_t=M^\square_{t-1}\left(\alpha_tI_d-\eta_tk_tk_t^\top\right)-\eta_tg^\square_tk_t^\top.
\label{eq:nl_online_srdgd}
\end{equation}
The same expression applies to row-vector memories as $1\times d$ linear maps. Because all five states generate quantities used to determine their subsequent updates, stored content, update representations, and learning dynamics evolve as a coupled system. The recurrence is explicit and differentiable, so outer gradients can pass through the memory trajectory without an iterative inner solver.

The query projection remains outside the self-referential update loop. Following NL's construction, an outer-parameterized projection generates the query to read the output from the content memory:
\begin{equation}
q_t=W_qx_t,\qquad y_t=M^m_{t-1}q_t.
\label{eq:nl_nonadaptive_query}
\end{equation}
Here, $W_q\in\R^{d\times d}$ is the query projection, while $q_t,y_t\in\R^d$ are the query and output. The projection is learned through outer training but remains fixed during within-context memory evolution.

\subsubsection{Chunk-Wise Linear Recurrence}
\label{sec:nl_chunk_formulation}
The fully token-indexed SRNL recurrence in Equation~(\ref{eq:nl_online_srdgd}) must update its state at step $t$ before generating the quantities for step $t+1$, serializing the state updates and the five memory mappings across tokens. NL reduces this cost through chunking. Within each chunk, fixed boundary states generate token-dependent quantities in parallel, while the SR-DGD state updates accumulate in token order. Chunking refreshes the states used to generate update quantities only at chunk boundaries. We partition the ordered context into chunks of fixed length $C$. For token $t$, the state of memory $\square\in\{m,k,v,\eta,\alpha\}$ at the beginning of its chunk is $M^\square_{C\times\left\lfloor\frac{t-1}{C}\right\rfloor}$. These boundary states generate
\begin{equation}
\small
k_t=M^k_{C\times\left\lfloor\frac{t-1}{C}\right\rfloor}x_t,\quad v_t=M^v_{C\times\left\lfloor\frac{t-1}{C}\right\rfloor}x_t,\quad \eta_t=\phi_\eta\!\left(m^\eta_{C\times\left\lfloor\frac{t-1}{C}\right\rfloor}x_t\right),\quad \alpha_t=\phi_\alpha\!\left(m^\alpha_{C\times\left\lfloor\frac{t-1}{C}\right\rfloor}x_t\right).
\label{eq:nl_chunk_generated_quantities}
\end{equation}
With $q_t=W_qx_t$, the content readout, self-generated target, and output-space learning signal are
\begin{equation}
y_t=M^m_{C\times\left\lfloor\frac{t-1}{C}\right\rfloor}q_t,\qquad \widehat v^\square_t=M^\square_{C\times\left\lfloor\frac{t-1}{C}\right\rfloor}v_t,\qquad g^\square_t=M^\square_{C\times\left\lfloor\frac{t-1}{C}\right\rfloor}k_t-\widehat v^\square_t.
\label{eq:nl_chunk_target_signal}
\end{equation}
Because the boundary states remain fixed throughout the chunk, the quantities in Equations~(\ref{eq:nl_chunk_generated_quantities}) and~(\ref{eq:nl_chunk_target_signal}) can be computed in parallel over the tokens within that chunk. The working memory states are then updated in token order using Equation~(\ref{eq:nl_online_srdgd}) with these update quantities. At the end of each chunk, the accumulated memory states become the boundary states used for the next chunk. 

\subsection{Stability-Matched Step-Size Control}
\label{sec:soft_projection}
The unconstrained SR-DGD update couples every memory to quantities generated by the evolving system itself within each image. To isolate the resulting feedback, we first analyze the fully token-indexed recurrence. The resulting step-size control is applied at each token and has the same form under chunking. Let $\delta_t=k_t-v_t\in\R^d$ denote the key-value discrepancy. Equations~(\ref{eq:nl_self_generated_target}) and~(\ref{eq:nl_self_referential_signal}) then give $g_t^\square=M_{t-1}^\square\delta_t$. Substituting this relation into Equation~(\ref{eq:nl_online_srdgd}) yields the recurrence:
\begin{equation}
M_t^\square=M_{t-1}^\square\left(\alpha_tI_d-\eta_tk_tk_t^\top-\eta_t\delta_tk_t^\top\right).
\label{eq:visionhope_unconstrained_transition}
\end{equation}
The matrix multiplying $M_{t-1}^\square$ is the one-step memory transition. For $k_t\neq0$, evaluating this transition along the unit direction of $k_t$ and applying the reverse triangle inequality gives
\begin{equation}
\left\lVert\alpha_tI_d-\eta_tk_tk_t^\top-\eta_t\delta_tk_t^\top\right\rVert_2=\left\lVert\alpha_tI_d-\eta_t(k_t+\delta_t)k_t^\top\right\rVert_2\geq\eta_t\lVert k_t\rVert_2\lVert k_t+\delta_t\rVert_2-\alpha_t.
\label{eq:visionhope_unconstrained_expansion_bound}
\end{equation}
Thus, $\eta_t\lVert k_t\rVert_2\lVert k_t+\delta_t\rVert_2>1+\alpha_t$ is sufficient for an expansive transition, and the recurrence provides no general non-expansion guarantee. Because the updated memories generate the quantities governing subsequent updates, such amplification can feed back and compound along the context. Under chunking, the same feedback passes through the boundary states propagated between chunks. To guarantee non-expansion, we bound the spectral norm of the retained memory transition $\alpha_tI_d-\eta_tk_tk_t^\top$ by $\alpha_t$ and keep that of the self-referential injection $-\eta_t\delta_tk_t^\top$ below $1-\alpha_t$. We implement these complementary constraints with a soft injection cap followed by a spectral clamp.

\begin{definition}[Stability-matched step-size control]
\label{def:stability_matched_projection}
VisionHOPE applies stabilized $L_2$ normalization to each key, ensuring $\lVert k_t\rVert_2\leq1$. Let $\epsilon>0$ be a constant for numerical stability. For the raw step $\eta_t>0$ and retention factor $\alpha_t\in[0,1)$ generated by their memories, define the soft injection cap:
\begin{equation}
r_t=\sqrt{\lVert\delta_t\rVert_2^2+\epsilon^2}+\epsilon,\qquad \eta_t^{\mathrm{inj}}=\frac{1-\alpha_t}{r_t},\qquad \bar\eta_t=\eta_t^{\mathrm{inj}}\left(1-\exp\left(-\frac{\eta_t}{\eta_t^{\mathrm{inj}}}\right)\right),
\label{eq:visionhope_soft_projection}
\end{equation}
where $r_t$ is the stabilized magnitude of the key-value discrepancy, $\eta_t^{\mathrm{inj}}$ is the step-size limit for self-referential injection, and $\bar\eta_t$ is the smoothly capped candidate step. We then define the spectral clamp, which enforces the retained-transition bound and produces the final executed step $\widetilde\eta_t$:
\begin{equation}
\eta_t^{\mathrm{spec}}=\frac{2\alpha_t}{\lVert k_t\rVert_2^2},\qquad \widetilde\eta_t=\min\!\left(\bar\eta_t,\eta_t^{\mathrm{spec}}\right),
\label{eq:visionhope_spectral_clamp}
\end{equation}
where $\eta_t^{\mathrm{spec}}$ is the spectral limit for the retained memory transition and is set to $+\infty$ when $k_t=0$. The soft injection cap smoothly limits self-referential injection while preserving small raw steps to first order. The spectral clamp leaves the candidate unchanged unless it exceeds $\eta_t^{\mathrm{spec}}$.
\end{definition}

Replacing $\eta_t$ in Equation~(\ref{eq:visionhope_unconstrained_transition}) with $\widetilde\eta_t$ gives
\begin{equation}
M_t^\square=M_{t-1}^\square T_t,\qquad T_t=A_t+B_t,\qquad A_t=\alpha_tI_d-\widetilde\eta_tk_tk_t^\top,\qquad B_t=-\widetilde\eta_t\delta_tk_t^\top.
\label{eq:visionhope_projected_transition}
\end{equation}
Here, $A_t$ is the retained memory transition, $B_t$ is the self-referential injection operator, and $T_t$ is the complete controlled transition governing the memory update at each token.

\begin{proposition}[Complementary operator bounds]
\label{prop:visionhope_operator_bounds}
Under Definition~\ref{def:stability_matched_projection}, the two transition components satisfy the following complementary operator bounds:
\begin{equation}
\lVert A_t\rVert_2\leq\alpha_t,\qquad \lVert B_t\rVert_2<1-\alpha_t.
\label{eq:visionhope_operator_bounds}
\end{equation}
\end{proposition}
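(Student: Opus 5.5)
We treat the two bounds separately, since each is enforced by one of the two components of Definition~\ref{def:stability_matched_projection}. Both arguments need only $\widetilde\eta_t\le\bar\eta_t$ and $\widetilde\eta_t\le\eta_t^{\mathrm{spec}}$ (immediate from the $\min$), $\widetilde\eta_t\ge0$, and $\lVert k_t\rVert_2\le1$.

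\textbf{Bound on $A_t$.} The matrix $A_t=\alpha_tI_d-\widetilde\eta_tk_tk_t^\top$ is symmetric, so its spectral norm equals the largest absolute eigenvalue. If $k_t=0$, then $A_t=\alpha_tI_d$ and the bound is trivial. Otherwise the eigenvalues are $\alpha_t$, with multiplicity $d-1$ on $k_t^\perp$, and $\alpha_t-\widetilde\eta_t\lVert k_t\rVert_2^2$ along $k_t$.

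From $0\le\widetilde\eta_t\le\eta_t^{\mathrm{spec}}=2\alpha_t/\lVert k_t\rVert_2^2$ we get $\widetilde\eta_t\lVert k_t\rVert_2^2\in[0,2\alpha_t]$. Hence $\alpha_t-\widetilde\eta_t\lVert k_t\rVert_2^2\in[-\alpha_t,\alpha_t]$, and therefore $\lVert A_t\rVert_2\le\alpha_t$.

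\textbf{Bound on $B_t$.} The matrix $B_t$ has rank one, so $\lVert B_t\rVert_2=\widetilde\eta_t\lVert\delta_t\rVert_2\lVert k_t\rVert_2\le\widetilde\eta_t\lVert\delta_t\rVert_2$. Using $\widetilde\eta_t\le\bar\eta_t$, we aim to show $\bar\eta_t\lVert\delta_t\rVert_2<1-\alpha_t$. This takes two steps.

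\begin{enumerate}
\item Since $\eta_t>0$ and $\eta_t^{\mathrm{inj}}>0$ (because $\alpha_t<1$ and $r_t>0$), we have $0<1-\exp(-\eta_t/\eta_t^{\mathrm{inj}})<1$. So $\bar\eta_t<\eta_t^{\mathrm{inj}}=(1-\alpha_t)/r_t$.
\item Because $\epsilon>0$, we have $r_t=\sqrt{\lVert\delta_t\rVert_2^2+\epsilon^2}+\epsilon>\lVert\delta_t\rVert_2$. Hence $\lVert\delta_t\rVert_2/r_t<1$.
\end{enumerate}

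Combining the two gives $\lVert B_t\rVert_2\le\bar\eta_t\lVert\delta_t\rVert_2<(1-\alpha_t)\lVert\delta_t\rVert_2/r_t\le1-\alpha_t$.

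\textbf{Where care is needed.} The only delicate point is keeping the inequality strict in the case $\delta_t=0$ or $k_t=0$. There $\lVert B_t\rVert_2=0<1-\alpha_t$ holds directly because $\alpha_t<1$.

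A second subtlety is the role of $\alpha_t=0$. Then $\eta_t^{\mathrm{spec}}=0$, so $\widetilde\eta_t=0$ and both bounds hold trivially. This confirms that the clamp is consistent at that edge.

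Apart from these edge cases, the argument is just the eigenvalue computation for a symmetric rank-one perturbation of the identity, plus the rank-one norm identity. I expect no real obstacle.
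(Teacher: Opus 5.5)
Your proof is correct and follows essentially the same route as the paper: an eigenvalue computation for $A_t=\alpha_tI_d-\widetilde\eta_tk_tk_t^\top$ using $0\le\widetilde\eta_t\lVert k_t\rVert_2^2\le2\alpha_t$ from the spectral clamp, and the rank-one norm identity for $B_t$ combined with $\widetilde\eta_t\le\bar\eta_t<\eta_t^{\mathrm{inj}}$, $\lVert k_t\rVert_2\le1$, and $r_t>\lVert\delta_t\rVert_2$. Your separate handling of $\delta_t=0$ is needed because the strict step in your chain requires $\lVert\delta_t\rVert_2>0$. The paper instead places the strict inequality at the final step $\lVert\delta_t\rVert_2/r_t<1$, which covers that case automatically.
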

Appendix~\ref{app:projection_properties} provides the proof. The first bound limits the spectral norm of the retained memory transition to $\alpha_t$. The second scales the admissible self-referential injection with the contraction margin left by retention, allowing less additional feedback as $\alpha_t$ approaches one. Together, these bounds control both potential sources of one-step amplification in the complete transition.

\begin{corollary}[Token-wise non-expansion]
\label{cor:visionhope_token_nonexpansion}
For the fully token-indexed recurrence, the following transition and state bounds hold for every memory $\square\in\{m,k,v,\eta,\alpha\}$ and token $t$:
\begin{equation}
\lVert T_t\rVert_2<1,\qquad \lVert M_t^\square\rVert_F\leq\lVert M_{t-1}^\square\rVert_F.
\label{eq:visionhope_token_nonexpansion}
\end{equation}
\end{corollary}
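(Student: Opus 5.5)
The corollary follows directly from Proposition~\ref{prop:visionhope_operator_bounds}. The plan has two steps: bound the transition by the triangle inequality, then transfer that bound to the state through submultiplicativity of the Frobenius norm against the spectral norm.

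\textbf{Step 1: the transition bound.} By Equation~(\ref{eq:visionhope_projected_transition}), $T_t=A_t+B_t$. The triangle inequality for the spectral norm, combined with the bounds of Proposition~\ref{prop:visionhope_operator_bounds}, gives
\begin{equation*}
\lVert T_t\rVert_2\leq\lVert A_t\rVert_2+\lVert B_t\rVert_2<\alpha_t+(1-\alpha_t)=1 .
\end{equation*}
The inequality is strict because the bound on $B_t$ is strict. That strictness comes from $r_t>\lVert\delta_t\rVert_2$ when $\epsilon>0$, together with $\bar\eta_t<\eta_t^{\mathrm{inj}}$. I treat both facts as already established inside the proposition.

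The case $k_t=0$ needs a brief comment. There $\eta_t^{\mathrm{spec}}=+\infty$, so $\widetilde\eta_t=\bar\eta_t$, but $B_t=0$ and $T_t=\alpha_tI_d$. The norm is then $\alpha_t<1$ and the bound holds trivially. This case is covered by the proposition in any event.

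\textbf{Step 2: the state bound.} The recurrence is $M^\square_t=M^\square_{t-1}T_t$. For any matrix $X$ with compatible dimensions, $\lVert XT\rVert_F\leq\lVert X\rVert_F\lVert T\rVert_2$. To see this, write $\lVert XT\rVert_F^2=\sum_i\lVert x_i^\top T\rVert_2^2$ over the rows $x_i^\top$ of $X$. Each term satisfies $\lVert x_i^\top T\rVert_2=\lVert T^\top x_i\rVert_2\leq\lVert T\rVert_2\lVert x_i\rVert_2$, since $\lVert T^\top\rVert_2=\lVert T\rVert_2$. The same row-wise argument covers the row-vector memories $m^\eta$ and $m^\alpha$, where the Frobenius norm is just the Euclidean norm of a $1\times d$ vector. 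Combining with Step 1 gives
\begin{equation*}
\lVert M^\square_t\rVert_F\leq\lVert T_t\rVert_2\lVert M^\square_{t-1}\rVert_F\leq\lVert M^\square_{t-1}\rVert_F .
\end{equation*}
This holds for every $\square\in\{m,k,v,\eta,\alpha\}$, because all five memories share the same right factor $T_t$.

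\textbf{Potential obstacle.} The only real issue is whether $T_t$ is genuinely shared across the memories and independent of $\square$. By Equation~(\ref{eq:visionhope_unconstrained_transition}), $g^\square_t=M^\square_{t-1}\delta_t$ holds uniformly: the self-generated target and the response both pass through the same $M^\square_{t-1}$. So the transition depends only on $(\alpha_t,\widetilde\eta_t,k_t,\delta_t)$. These quantities may depend on the previous states, but the bound in Step 1 holds pointwise for whatever values they take. The self-referential coupling therefore does not affect the argument.
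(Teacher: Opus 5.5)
Your proposal is correct and follows the paper's proof essentially verbatim. Like the paper, you apply the triangle inequality to $T_t=A_t+B_t$ with the complementary bounds of Proposition~\ref{prop:visionhope_operator_bounds}, then use $\lVert M_{t-1}^\square T_t\rVert_F\leq\lVert M_{t-1}^\square\rVert_F\lVert T_t\rVert_2$; your row-wise proof of that Frobenius--spectral inequality and your remark on the $k_t=0$ case add detail the paper takes for granted.
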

The corollary establishes non-expansion of memory norms along the token-wise recurrence and is proved in Appendix~\ref{app:projection_properties}. The same step-size control gives the complementary operator bounds at each token for quantities generated from chunk boundary states. Appendix~\ref{app:chunk_stability} derives a shared-gain representation and proves non-expansion relative to the boundary states of each chunk.

\begin{figure}[t]
\centering
\includegraphics[width=0.99\linewidth]{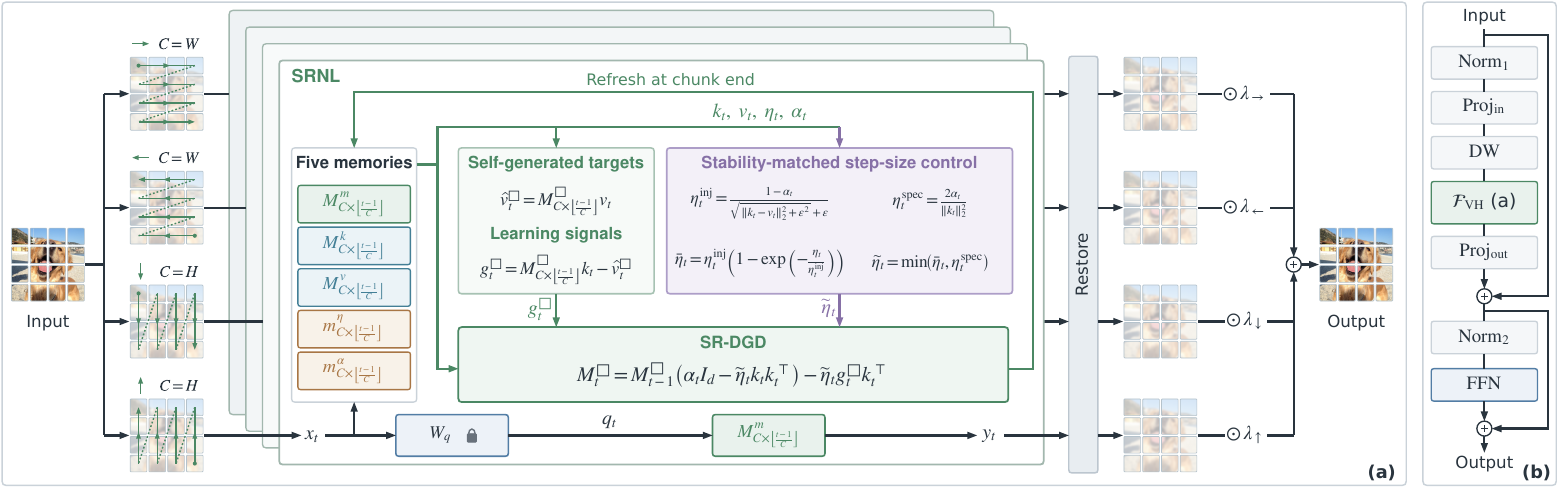}
\caption{(a) The VisionHOPE operator applies SRNL along four directional scans with spatially aligned chunks, followed by spatial restoration and channel-wise fusion. Five coupled memories evolve through SR-DGD with step-size control. (b) The VisionHOPE block integrates the operator with projections, depthwise convolution, and an FFN in two pre-normalized residual branches.}
\label{fig:visionhope_architecture}
\end{figure}

\subsection{VisionHOPE Architecture}
\label{sec:visionhope_architecture}
\paragraph{Four-directional VisionHOPE operator.}
Sections~\ref{sec:nl_preliminaries} and~\ref{sec:soft_projection} define the SRNL module over an ordered visual context and equip its recurrence with the stability-matched step-size control. The VisionHOPE operator extends SRNL to two-dimensional feature maps through four directional scans~\citep{NEURIPS2024_baa2da9a}, followed by spatial restoration and channel-wise fusion (Figure~\ref{fig:visionhope_architecture}(a)). Along each route, chunk boundaries are aligned with complete rows or columns. Let $Z\in\R^{B\times D_m\times H\times W}$ denote the internal feature map for scanning, where $B$ is the batch size, $D_m$ is the internal width, and $H$ and $W$ are the spatial dimensions. With $N=HW$, VisionHOPE serializes $Z$ as
\begin{equation}
\mathcal D=\{\rightarrow,\leftarrow,\downarrow,\uparrow\},\qquad Z_r=\mathcal P_r(Z)\in\R^{B\times N\times D_m},\qquad r\in\mathcal D,
\label{eq:visionhope_directional_serialization}
\end{equation}
where $\mathcal P_r$ serializes the feature grid in either direction under row-major or column-major ordering. Each direction runs an independent SRNL instance with the five-memory state defined in Equation~(\ref{eq:nl_self_referential_state}), initialized from learned values at the beginning of each image. Chunks follow the rows or columns traversed by each route. Let $C_r$ denote the chunk length for direction $r$. VisionHOPE sets
\begin{equation}
C_{\rightarrow}=C_{\leftarrow}=W,\qquad C_{\downarrow}=C_{\uparrow}=H.
\label{eq:visionhope_spatial_chunk_lengths}
\end{equation}
The row-major routes contain $H$ row-aligned chunks, while the column-major routes contain $W$ column-aligned chunks. Within each SRNL, the chunk-wise recurrence in Section~\ref{sec:nl_chunk_formulation} is computed using the executed step from Equation~(\ref{eq:visionhope_spectral_clamp}), preserving parallel generation of token-dependent quantities within each chunk. Let $Y_r\in\R^{B\times N\times D_m}$ denote the directional output for route $r$. The four sequences are restored to the common feature grid and fused using learned channel-wise scales:
\begin{equation}
Y=\sum_{r\in\mathcal D}\lambda_r\odot\mathcal P_r^{-1}(Y_r),\qquad \lambda_r\in\R^{D_m},
\label{eq:visionhope_directional_fusion}
\end{equation}
where $\mathcal P_r^{-1}$ reverses the directional serialization, $\lambda_r$ weights the output of route $r$ channel-wise, and $\odot$ denotes element-wise multiplication. These operations define the VisionHOPE operator $\mathcal F_{\mathrm{VH}}(Z)=Y$, which preserves the feature-map shape and scales linearly with token count $N$.

\paragraph{Block design.}
A VisionHOPE block combines the VisionHOPE operator with input and output projections, local depthwise convolution, and a Feed-Forward Network (FFN) in two pre-normalized residual branches (Figure~\ref{fig:visionhope_architecture}(b)). For a block input $X\in\R^{B\times D\times H\times W}$, the structural form is
\begin{equation}
\footnotesize
Z=\operatorname{DW}\!\left(\operatorname{Proj}_{\mathrm{in}}\!\left(\operatorname{Norm}_1(X)\right)\right),\quad X'=X+\operatorname{Proj}_{\mathrm{out}}\!\left(\mathcal F_{\mathrm{VH}}(Z)\right),\quad X_{\mathrm{out}}=X'+\operatorname{FFN}\!\left(\operatorname{Norm}_2(X')\right),
\label{eq:visionhope_simplified_block}
\end{equation}
where $\operatorname{Norm}_1$ and $\operatorname{Norm}_2$ denote normalization layers, $\operatorname{Proj}_{\mathrm{in}}$ and $\operatorname{Proj}_{\mathrm{out}}$ are the pointwise input and output projections that map channels from $D$ to $D_m$ and from $D_m$ to $D$, respectively, and $\operatorname{DW}$ denotes depthwise convolution. Appendix~\ref{app:visionhope_architecture_details} provides further implementation details.

\section{Experiments}
\label{sec:experiments}
We evaluate VisionHOPE on ImageNet-1K classification~\citep{deng2009imagenet}, COCO object detection and instance segmentation~\citep{lin2014microsoft}, and ADE20K semantic segmentation~\citep{Zhou_2017_CVPR}. We further examine computational efficiency and key design choices. Model configurations, training protocols, and visual analysis are provided in Appendices~\ref{app:model_configurations},~\ref{app:implementation_details}, and~\ref{app:visual_analysis}, respectively.

\subsection{Image Classification}
We train VisionHOPE from scratch on ImageNet-1K, which contains 1.28M training images and 50K validation images from 1,000 classes. We report single-crop Top-1 accuracy at $224\times224$ resolution. Tables~\ref{tab:imagenet_hierarchical} and~\ref{tab:imagenet_plain} compare hierarchical and plain backbones, respectively. Hierarchical comparisons include the CNN-based backbones ConvNeXt~\citep{Liu_2022_CVPR}, InternImage~\citep{Wang_2023_CVPR}, and MambaOut~\citep{Yu_2025_CVPR}; the ViT-based backbones FasterViT~\citep{ICLR2024_7e496423}, TransNeXt~\citep{Shi_2024_CVPR}, RMT~\citep{Fan_2024_CVPR}, SOFT++~\citep{lu2024softmax}, and MILA~\citep{NEURIPS2024_e618724a}; the SSM-based backbones VMamba~\citep{NEURIPS2024_baa2da9a}, LocalVMamba~\citep{huang2024localmamba}, MambaVision~\citep{Hatamizadeh_2025_CVPR}, and VSSD~\citep{Shi_2025_ICCV}; and the TTT-based H-ViT$^3$~\citep{Han_2026_CVPR}. Plain comparisons include isotropic ConvNeXt~\citep{Liu_2022_CVPR}; the ViT backbones DeiT~\citep{pmlr-v139-touvron21a}, XCiT~\citep{NEURIPS2021_a655fbe4}, and Agent-DeiT~\citep{han2024agent}; the SSM backbones Vim~\citep{pmlr-v235-zhu24f} and Mamba\textsuperscript{\textregistered}~\citep{Wang_2025_CVPR}; and the TTT-based ViT$^3$~\citep{Han_2026_CVPR} and Vision-TTT~\citep{kong2026vision}.
Across both backbone layouts, VisionHOPE achieves the best or tied-best Top-1 accuracy in every scale panel, demonstrating the strong potential of self-modifying learning systems as general-purpose visual backbones.

\begin{table}[t]
\centering
\caption{ImageNet-1K classification of hierarchical backbones at $224\times224$. The panels correspond to increasing model scales. Parameters, FLOPs, and Top-1 accuracy are reported in M, G, and \%.}
\label{tab:imagenet_hierarchical}
\scriptsize
\setlength{\tabcolsep}{2pt}
\renewcommand{\arraystretch}{1.03}
\belowrulesep=0pt
\aboverulesep=0pt
\begin{minipage}[t]{0.327\linewidth}
\centering
\begin{tabular}{lcccc}
\toprule
\textbf{Method} & \textbf{Type} & \textbf{\#P} & \textbf{FLOPs} & \textbf{Acc.} \\
\midrule
ConvNeXt-T & CNN & 29 & 4.5 & 82.1 \\
InternImage-T & CNN & 30 & 5.0 & 83.5 \\
MambaOut-T & CNN & 27 & 4.5 & 82.7 \\
FasterViT-1 & ViT & 53 & 5.3 & 83.2 \\
TransNeXt-T & ViT & 28 & 5.7 & 84.0 \\
RMT-S & ViT & 27 & 4.5 & 84.1 \\
SOFT++-S & ViT & 27 & 4.5 & 82.6 \\
MILA-T & ViT & 25 & 4.2 & 83.5 \\
VMamba-T & SSM & 30 & 4.9 & 82.6 \\
LocalVMamba-T & SSM & 26 & 5.7 & 82.7 \\
MambaVision-T2 & SSM & 35 & 5.1 & 82.7 \\
H-ViT$^3$-T & TTT & 29 & 4.9 & 84.0 \\
\rowcolor{VisionHOPEGreen}
{VisionHOPE-T} & {NL} & {27} & {4.9} & {84.1} \\
\bottomrule
\end{tabular}
\end{minipage}\hfill
\begin{minipage}[t]{0.327\linewidth}
\centering
\begin{tabular}{lcccc}
\toprule
\textbf{Method} & \textbf{Type} & \textbf{\#P} & \textbf{FLOPs} & \textbf{Acc.} \\
\midrule
ConvNeXt-S & CNN & 50 & 8.7 & 83.1 \\
InternImage-S & CNN & 50 & 8.0 & 84.2 \\
MambaOut-S & CNN & 48 & 9.0 & 84.1 \\
FasterViT-2 & ViT & 76 & 8.7 & 84.2 \\
TransNeXt-S & ViT & 50 & 10.3 & 84.7 \\
RMT-B & ViT & 54 & 9.7 & 85.0 \\
SOFT++-M & ViT & 48 & 8.7 & 83.7 \\
MILA-S & ViT & 43 & 7.3 & 84.4 \\
VMamba-S & SSM & 50 & 8.7 & 83.6 \\
LocalVMamba-S & SSM & 50 & 11.4 & 83.7 \\
MambaVision-S & SSM & 50 & 7.5 & 83.3 \\
H-ViT$^3$-S & TTT & 54 & 8.8 & 84.9 \\
\rowcolor{VisionHOPEGreen}
{VisionHOPE-S} & {NL} & {53} & {9.8} & {85.2} \\
\bottomrule
\end{tabular}
\end{minipage}\hfill
\begin{minipage}[t]{0.327\linewidth}
\centering
\begin{tabular}{lcccc}
\toprule
\textbf{Method} & \textbf{Type} & \textbf{\#P} & \textbf{FLOPs} & \textbf{Acc.} \\
\midrule
ConvNeXt-B & CNN & 89 & 15.4 & 83.8 \\
InternImage-B & CNN & 97 & 16.0 & 84.9 \\
MambaOut-B & CNN & 85 & 15.8 & 84.2 \\
FasterViT-3 & ViT & 160 & 18.2 & 84.9 \\
TransNeXt-B & ViT & 90 & 18.4 & 84.8 \\
RMT-L & ViT & 95 & 18.2 & 85.5 \\
SOFT++-L & ViT & 85 & 15.4 & 84.1 \\
MILA-B & ViT & 96 & 16.2 & 85.3 \\
VMamba-B & SSM & 89 & 15.4 & 83.9 \\
MambaVision-B & SSM & 98 & 15.0 & 84.2 \\
VSSD-B & SSM & 89 & 16.1 & 85.4 \\
H-ViT$^3$-B & TTT & 94 & 16.7 & 85.5 \\
\rowcolor{VisionHOPEGreen}
{VisionHOPE-B} & {NL} & {91} & {17.3} & {85.6} \\
\bottomrule
\end{tabular}
\end{minipage}
\end{table}

\begin{table}[t]
\centering
\caption{ImageNet-1K classification of plain backbones at $224\times224$ resolution. The panels correspond to increasing scales. Parameters, FLOPs, and Top-1 accuracy are reported in M, G, and \%.}
\label{tab:imagenet_plain}
\scriptsize
\setlength{\tabcolsep}{1.6pt}
\renewcommand{\arraystretch}{1.03}
\belowrulesep=0pt
\aboverulesep=0pt
\begin{minipage}[t]{0.327\linewidth}
\centering
\begin{tabular}{lcccc}
\toprule
\textbf{Method} & \textbf{Type} & \textbf{\#P} & \textbf{FLOPs} & \textbf{Acc.} \\
\midrule
DeiT-T & ViT & 6 & 1.3 & 72.2 \\
XCiT-T12/16 & ViT & 7 & 1.2 & 77.1 \\
Agent-DeiT-T & ViT & 6 & 1.2 & 74.9 \\
Vim-T & SSM & 7 & 1.6 & 76.1 \\
Mamba\textsuperscript{\textregistered}-T & SSM & 9 & 1.4 & 77.4 \\
ViT$^3$-T & TTT & 6 & 1.2 & 76.5 \\
Vision-TTT-T & TTT & 7 & 1.4 & 77.7 \\
\rowcolor{VisionHOPEGreen}
{P-VisionHOPE-T} & {NL} & {6} & {1.2} & {78.4} \\
\bottomrule
\end{tabular}
\end{minipage}\hfill
\begin{minipage}[t]{0.327\linewidth}
\centering
\begin{tabular}{lcccc}
\toprule
\textbf{Method} & \textbf{Type} & \textbf{\#P} & \textbf{FLOPs} & \textbf{Acc.} \\
\midrule
ConvNeXt-S (iso.) & CNN & 22 & 4.3 & 79.7 \\
DeiT-S & ViT & 22 & 4.6 & 79.8 \\
Agent-DeiT-S & ViT & 23 & 4.4 & 80.5 \\
Vim-S & SSM & 26 & 5.3 & 80.3 \\
Mamba\textsuperscript{\textregistered}-S & SSM & 28 & 5.1 & 81.4 \\
ViT$^3$-S & TTT & 24 & 4.8 & 81.6 \\
Vision-TTT-S & TTT & 26 & 5.3 & 81.8 \\
\rowcolor{VisionHOPEGreen}
{P-VisionHOPE-S} & {NL} & {22} & {4.7} & {82.3} \\
\bottomrule
\end{tabular}
\end{minipage}\hfill
\begin{minipage}[t]{0.327\linewidth}
\centering
\begin{tabular}{lcccc}
\toprule
\textbf{Method} & \textbf{Type} & \textbf{\#P} & \textbf{FLOPs} & \textbf{Acc.} \\
\midrule
ConvNeXt-B (iso.) & CNN & 87 & 16.9 & 82.0 \\
DeiT-B & ViT & 87 & 17.6 & 81.8 \\
Agent-DeiT-B & ViT & 87 & 17.6 & 82.0 \\
Vim-B & SSM & 98 & 19.0 & 81.9 \\
Mamba\textsuperscript{\textregistered}-B & SSM & 99 & 19.8 & 83.0 \\
ViT$^3$-B & TTT & 90 & 18.0 & 82.6 \\
Vision-TTT-B & TTT & 102 & 20.3 & 82.7 \\
\rowcolor{VisionHOPEGreen}
{P-VisionHOPE-B} & {NL} & {88} & {19.0} & {83.4} \\
\bottomrule
\end{tabular}
\end{minipage}
\end{table}

\subsection{Object Detection and Instance Segmentation}
We transfer ImageNet-pretrained hierarchical backbones to COCO using Mask R-CNN~\citep{He_2017_ICCV} and evaluate on val2017. The comparisons include the CNN-based backbones ConvNeXt~\citep{Liu_2022_CVPR}, InternImage~\citep{Wang_2023_CVPR}, and MambaOut~\citep{Yu_2025_CVPR}; the ViT-based backbones Swin~\citep{9710580}, CSWin~\citep{9878609}, SOFT++~\citep{lu2024softmax}, and MILA~\citep{NEURIPS2024_e618724a}; the SSM-based backbones VMamba~\citep{NEURIPS2024_baa2da9a}, LocalVMamba~\citep{huang2024localmamba}, and VSSD~\citep{Shi_2025_ICCV}; and the TTT-based H-ViT$^3$~\citep{Han_2026_CVPR}. Table~\ref{tab:coco_maskrcnn_1x} reports bounding-box and mask AP under the $1\times$ schedule, with FLOPs measured at $1280\times800$ resolution. Detailed results under the $3\times$ schedule are provided in Appendix~\ref{app:coco_3x_results}. VisionHOPE achieves the best or tied-best box and mask AP across all three evaluated model scales, showing that its gains transfer consistently to object detection and instance segmentation.

\begin{table}[t]
\centering
\caption{COCO object detection and instance segmentation with Mask R-CNN under the $1\times$ schedule. The panels correspond to increasing backbone scales. FLOPs and AP are reported in G and \%.}
\label{tab:coco_maskrcnn_1x}
\scriptsize
\setlength{\tabcolsep}{3pt}
\renewcommand{\arraystretch}{1.03}
\belowrulesep=0pt
\aboverulesep=0pt
\begin{minipage}[t]{0.327\linewidth}
\centering
\begin{tabular}{lccc}
\toprule
\textbf{Method} & \textbf{FLOPs} & $\mathbf{AP}^{b}$ & $\mathbf{AP}^{m}$ \\
\midrule
ConvNeXt-T & 262 & 44.2 & 40.1 \\
InternImage-T & 270 & 47.2 & 42.5 \\
MambaOut-T & 262 & 45.1 & 41.0 \\
CSWin-T & 279 & 46.7 & 42.2 \\
SOFT++-S & 261 & 43.8 & 40.1 \\
MILA-T & 255 & 46.8 & 42.1 \\
VMamba-T & 271 & 47.3 & 42.7 \\
VSSD-T & 265 & 46.9 & 42.6 \\
H-ViT$^{3}$-T & 271 & 47.3 & 42.8 \\
\rowcolor{VisionHOPEGreen}
{VisionHOPE-T} & {266} & {47.9} & {43.1} \\
\bottomrule
\end{tabular}
\end{minipage}\hfill
\begin{minipage}[t]{0.327\linewidth}
\centering
\begin{tabular}{lccc}
\toprule
\textbf{Method} & \textbf{FLOPs} & $\mathbf{AP}^{b}$ & $\mathbf{AP}^{m}$ \\
\midrule
ConvNeXt-S & 348 & 45.4 & 41.8 \\
InternImage-S & 340 & 47.8 & 43.3 \\
MambaOut-S & 354 & 47.4 & 42.7 \\
CSWin-S & 342 & 47.9 & 43.2 \\
SOFT++-M & 349 & 46.6 & 42.0 \\
MILA-S & 319 & 49.2 & 44.2 \\
VMamba-S & 349 & 48.7 & 43.7 \\
VSSD-S & 325 & 48.4 & 43.5 \\
H-ViT$^{3}$-S & 349 & 49.1 & 44.1 \\
\rowcolor{VisionHOPEGreen}
{VisionHOPE-S} & {365} & {49.5} & {44.2} \\
\bottomrule
\end{tabular}
\end{minipage}\hfill
\begin{minipage}[t]{0.327\linewidth}
\centering
\begin{tabular}{lccc}
\toprule
\textbf{Method} & \textbf{FLOPs} & $\mathbf{AP}^{b}$ & $\mathbf{AP}^{m}$ \\
\midrule
ConvNeXt-B & 486 & 47.0 & 42.7 \\
InternImage-B & 501 & 48.8 & 44.0 \\
MambaOut-B & 495 & 47.4 & 43.0 \\
Swin-B & 496 & 46.9 & 42.3 \\
CSWin-B & 526 & 48.7 & 43.9 \\
SOFT++-L & 481 & 47.0 & 42.2 \\
MILA-B & 502 & 50.5 & 45.0 \\
VMamba-B & 485 & 49.2 & 44.1 \\
H-ViT$^{3}$-B & 510 & 50.0 & 44.6 \\
\rowcolor{VisionHOPEGreen}
{VisionHOPE-B} & {516} & {50.5} & {45.0} \\
\bottomrule
\end{tabular}
\end{minipage}
\end{table}

\subsection{Semantic Segmentation}
We transfer ImageNet-pretrained hierarchical backbones to ADE20K using UPerNet~\citep{Xiao_2018_ECCV} and evaluate on the validation set. The comparisons include the CNN-based backbones ConvNeXt~\citep{Liu_2022_CVPR} and MambaOut~\citep{Yu_2025_CVPR}; the ViT-based backbones VVT~\citep{10149455} and SOFT++~\citep{lu2024softmax}; the SSM-based MambaVision~\citep{Hatamizadeh_2025_CVPR}; and the TTT-based H-ViT$^3$~\citep{Han_2026_CVPR}. Table~\ref{tab:ade20k_segmentation} reports mIoU, complete-model parameters, and FLOPs, with FLOPs measured at $512\times2048$ resolution. VisionHOPE achieves the best mIoU, providing evidence for the effectiveness of self-modifying learning in semantic segmentation.

\begin{table}[t]
\centering
\caption{ADE20K semantic segmentation with UPerNet. The panels correspond to increasing backbone scales. Complete-model parameters, FLOPs, and mIoU are reported in M, G, and \%.}
\label{tab:ade20k_segmentation}
\scriptsize
\setlength{\tabcolsep}{4pt}
\renewcommand{\arraystretch}{1.03}
\belowrulesep=0pt
\aboverulesep=0pt
\begin{minipage}[t]{0.327\linewidth}
\centering
\begin{tabular}{lccc}
\toprule
\textbf{Backbone} & \textbf{\#P} & \textbf{FLOPs} & \textbf{mIoU} \\
\midrule
ConvNeXt-T & 60 & 939 & 46.0 \\
MambaOut-T & 54 & 938 & 47.4 \\
VVT-S & 56 & 960 & 46.8 \\
SOFT++-S & 60 & 948 & 46.5 \\
MambaVision-T & 55 & 945 & 46.0 \\
H-ViT$^3$-T & 58 & 946 & 48.0 \\
\rowcolor{VisionHOPEGreen}
{VisionHOPE-T} & {55} & {942} & {49.4} \\
\bottomrule
\end{tabular}
\end{minipage}\hfill
\begin{minipage}[t]{0.327\linewidth}
\centering
\begin{tabular}{lccc}
\toprule
\textbf{Backbone} & \textbf{\#P} & \textbf{FLOPs} & \textbf{mIoU} \\
\midrule
ConvNeXt-S & 82 & 1027 & 48.7 \\
MambaOut-S & 76 & 1032 & 49.5 \\
VVT-M & 78 & 1040 & 48.1 \\
SOFT++-M & 81 & 1040 & 48.9 \\
MambaVision-S & 84 & 1135 & 48.2 \\
H-ViT$^3$-S & 84 & 1026 & 50.2 \\
\rowcolor{VisionHOPEGreen}
{VisionHOPE-S} & {82} & {1043} & {50.3} \\
\bottomrule
\end{tabular}
\end{minipage}\hfill
\begin{minipage}[t]{0.327\linewidth}
\centering
\begin{tabular}{lccc}
\toprule
\textbf{Backbone} & \textbf{\#P} & \textbf{FLOPs} & \textbf{mIoU} \\
\midrule
ConvNeXt-B & 122 & 1170 & 49.1 \\
MambaOut-B & 112 & 1178 & 49.6 \\
VVT-L & 92 & 1068 & 48.8 \\
SOFT++-L & 121 & 1204 & 49.2 \\
MambaVision-B & 126 & 1342 & 49.1 \\
H-ViT$^3$-B & 124 & 1195 & 51.7 \\
\rowcolor{VisionHOPEGreen}
{VisionHOPE-B} & {121} & {1200} & {51.8} \\
\bottomrule
\end{tabular}
\end{minipage}
\end{table}

\subsection{Efficiency Analysis}
\paragraph{Computational efficiency.}
For a square feature map containing $N$ tokens of width $D$, we compare the per-layer interaction costs of DeiT~\citep{pmlr-v139-touvron21a}, Vim~\citep{pmlr-v235-zhu24f}, and VisionHOPE, including the associated projections, under the configurations used in our experiments:
\begin{equation}
\begin{aligned}
\mathcal C(\mathrm{DeiT})&=4ND^2+2N^2D,\\
\mathcal C(\mathrm{Vim})&=6ND^2+576ND,\\
\mathcal C(\mathrm{VisionHOPE})&=\frac{5}{4}ND^2+232ND+1600D\sqrt N.
\end{aligned}
\label{eq:visionhope_computational_complexity}
\end{equation}
Self-attention introduces a quadratic interaction term in $N$, whereas Vim and VisionHOPE remain linear in visual token count and scale more favorably as the input image resolution increases.

\paragraph{Memory efficiency.}
For batch size $B$, the corresponding activation-memory complexities are
\begin{equation}
\footnotesize
\mathcal M(\mathrm{DeiT})=O(BND+BN^2),\qquad \mathcal M(\mathrm{Vim})=O(BND),\qquad \mathcal M(\mathrm{VisionHOPE})=O(BND).
\label{eq:visionhope_memory_complexity}
\end{equation}
When materialized explicitly, DeiT's attention matrix introduces a quadratic memory term in $N$, whereas Vim and VisionHOPE retain linear memory scaling as the visual token count increases.

\begin{figure}[t]
\centering
\includegraphics[width=0.99\linewidth]{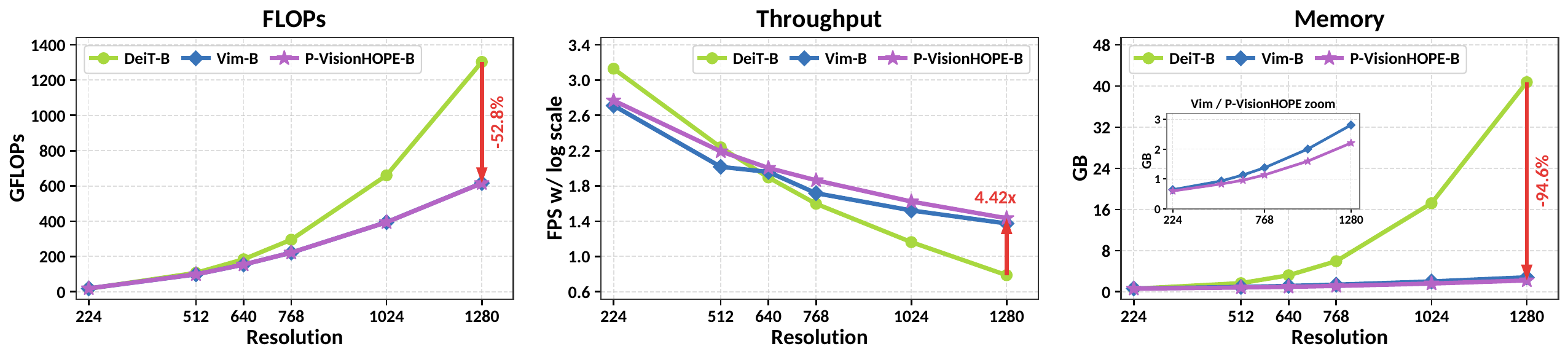}
\caption{Efficiency comparison among DeiT-B, Vim-B, and P-VisionHOPE-B. We plot FLOPs, throughput, and memory footprint for input resolutions ranging from $224\times224$ to $1280\times1280$. Hardware measurements use one NVIDIA A100 GPU with BF16 and a fixed batch size of $8$.}
\label{fig:visionhope_efficiency_base}
\end{figure}

Figure~\ref{fig:visionhope_efficiency_base} empirically confirms these theoretical trends. As input resolution increases, P-VisionHOPE-B maintains linear scaling with visual token count, whereas DeiT-B's computation and memory grow substantially faster. With similar FLOPs to Vim-B, P-VisionHOPE-B achieves higher throughput and lower memory use across all tested resolutions. These results demonstrate the scalability and practical efficiency of VisionHOPE. Appendix~\ref{app:efficiency_details} provides the complete derivations of Equations~(\ref{eq:visionhope_computational_complexity}) and~(\ref{eq:visionhope_memory_complexity}), together with additional efficiency results for the Tiny and Small models.

\subsection{Ablation Study}
We conduct the following ablations on P-VisionHOPE-S while keeping the backbone and training protocol fixed. The first group examines the contributions of the coupled SRNL memories and their DGD update. We use \emph{adaptive K/V} to denote within-image evolution of the key and value memories, and \emph{adaptive dynamics} for that of the learning-rate and retention memories. Removing either leaves the corresponding mappings learned through outer training but fixed within the image, while the content memory continues to evolve. The DGD ablation removes the additional key-dependent correction $-\widetilde\eta_tM^\square_{t-1}k_tk_t^\top$, while retaining the output-space learning signal, retention factor, and stability-matched step-size control. The second group isolates the complementary roles of the soft injection cap and spectral clamp in stabilization. Removing either control leaves the other unchanged. The injection clamp replaces the soft injection cap with a hard clamp at the same injection limit, whereas the soft spectral cap replaces the spectral clamp with a smooth map to the same spectral limit. The final group examines the two-dimensional realization: the one-way variant retains only the forward row-major route, the two-way variant adds its reverse, and the four-way fixed-sum variant uses four routes with unit channel-wise scales. The variant with shared initial states ties the learned initial memories across directions while maintaining separate memory trajectories.

\begin{table}[t]
\centering
\caption{Ablations on P-VisionHOPE-S. The left, middle, and right panels examine the coupled SRNL memories and DGD, stability-matched step-size control, and directional scanning and fusion. Top-1 accuracy is reported in \%. A NaN entry indicates numerical divergence during training.}
\label{tab:visionhope_core_ablations}
\footnotesize
\setlength{\tabcolsep}{4.0pt}
\renewcommand{\arraystretch}{1.03}
\belowrulesep=0pt
\aboverulesep=0pt
\resizebox{\linewidth}{!}{%
\begin{tabular}[t]{lc}
\toprule
\textbf{Variant} & \textbf{Acc.} \\
\midrule
w/o adaptive K/V \& dynamics & 81.7 \\
w/o adaptive K/V & 81.9 \\
w/o adaptive dynamics & 82.0 \\
w/o DGD & 81.9 \\
\rowcolor{VisionHOPEGreen}
{Full model} & 82.3 \\
\bottomrule
\end{tabular}%
\hspace{5pt}%
\begin{tabular}[t]{lc}
\toprule
\textbf{Variant} & \textbf{Acc.} \\
\midrule
w/o soft injection cap & NaN \\
w/o spectral clamp & 82.3 \\
w/ injection clamp & 81.8 \\
w/ soft spectral cap & 81.9 \\
\rowcolor{VisionHOPEGreen}
{Full model} & 82.3 \\
\bottomrule
\end{tabular}%
\hspace{5pt}%
\begin{tabular}[t]{lc}
\toprule
\textbf{Variant} & \textbf{Acc.} \\
\midrule
1-way scan & 81.6 \\
2-way scan & 81.9 \\
4-way fixed sum & 82.1 \\
Shared initial states & 82.0 \\
\rowcolor{VisionHOPEGreen}
{Full model} & 82.3 \\
\bottomrule
\end{tabular}
}
\end{table}

Table~\ref{tab:visionhope_core_ablations} shows consistent benefits from the coupled SRNL memories and their DGD update. Removing the soft injection cap causes the forward pass to diverge numerically early in training, demonstrating its importance for stable optimization. The injection clamp and soft spectral cap remain stable throughout training but are less effective than the complete control scheme. Removing the spectral clamp leaves accuracy unchanged in this setting, which may be related to its extremely low activation rate, as detailed in Appendix~\ref{app:additional_ablations}. Increasing directional coverage improves performance, while channel-wise scales and direction-specific initial states provide further gains. Additional analyses of step-size control behavior and configuration choices are provided in Appendix~\ref{app:additional_ablations}.

\section{Conclusion}
\label{sec:conclusion}
Building on NL, we introduced VisionHOPE, a self-modifying visual backbone whose stored content and learning rule co-evolve through five coupled memories within each image. Our stability-matched step-size control combines a soft injection cap with a spectral clamp to guarantee non-expansive memory dynamics. Four directional scans with spatially aligned chunks enable two-dimensional processing with linear scaling in token count. Experiments on ImageNet-1K, COCO, and ADE20K demonstrate strong performance in classification and dense prediction. These results establish self-modifying learning as a practical foundation for general-purpose visual backbones.

\bibliography{iclr2027_conference}
\bibliographystyle{iclr2027_conference}

\appendix
\section{Related Work}
\label{sec:related_work}
\subsection{Convolutional and Transformer Backbones}
CNNs established the modern visual backbone through local connectivity, shared spatial kernels, and translation-equivariant feature extraction~\citep{NIPS2012_c399862d,He_2016_CVPR}. Modern ConvNets revisit this design space through updated architectural choices in ConvNeXt~\citep{Liu_2022_CVPR} and ConvNeXt V2~\citep{Woo_2023_CVPR}, enlarged receptive fields in RepLKNet~\citep{Ding_2022_CVPR} and UniRepLKNet~\citep{Ding_2024_CVPR}, deformable spatial aggregation in InternImage~\citep{Wang_2023_CVPR}, multi-order gated aggregation in MogaNet~\citep{li2024moganet}, and efficient mobile design in RepViT~\citep{Wang_2024_CVPR}. MambaOut~\citep{Yu_2025_CVPR} further builds gated convolutional backbones by removing the state-space component from Mamba blocks~\citep{gu2023mamba}. These models demonstrate the continued competitiveness of local aggregation. However, the mappings governing that aggregation are learned during training and remain fixed during image processing.

ViT replaces local convolution with content-dependent global self-attention~\citep{NIPS2017_3f5ee243,dosovitskiy2020image}, while Swin Transformer~\citep{9710580} combines hierarchical representations with shifted-window attention. Subsequent backbones broaden this design space by revisiting both backbone organization and the mechanisms used for visual token interaction. MetaFormer~\citep{Yu_2022_CVPR} abstracts the common backbone structure beyond a particular token mixer, and MaxViT~\citep{tu2022maxvit} integrates window and grid attention for local and global interaction. EfficientViT~\citep{cai2022efficientvit} develops multi-scale linear attention for high-resolution dense prediction. RMT~\citep{Fan_2024_CVPR} introduces a retentive mechanism for spatially aware representation, while TransNeXt~\citep{Shi_2024_CVPR} develops foveal token aggregation. More recent work accelerates plain ViTs through a wider global token~\citep{fuller2026thicker}, develops strip self-attention for efficient multi-scale perception~\citep{11298398}, and systematically modernizes the canonical ViT block~\citep{wang2026vit}. Although these models make token interactions dependent on the current image, the learned mechanism generating those interactions does not evolve during inference.

\subsection{Visual State-Space Models}
Structured SSMs provide long-range sequence modeling through recurrent state updates~\citep{gu2021efficiently}. Mamba~\citep{gu2023mamba} introduces input-dependent state transitions and a hardware-aware parallel algorithm, achieving linear sequence-length scaling. Mamba-2~\citep{pmlr-v235-dao24a} connects structured state spaces and attention through state-space duality. Adapting these models to images requires resolving the mismatch between ordered sequences and two-dimensional spatial structure. Vim~\citep{pmlr-v235-zhu24f} uses bidirectional state-space processing, and VMamba~\citep{NEURIPS2024_baa2da9a} introduces four-directional selective scanning over two-dimensional feature maps. Related visual SSMs explore non-hierarchical layouts~\citep{yang2024plainmamba}, multidimensional traversal~\citep{li2024mamba}, windowed or atrous selective scans~\citep{huang2024localmamba,pei2025efficientvmamba}, and non-causal state-space duality~\citep{Shi_2025_ICCV}. More recent work combines redesigned Mamba blocks with attention in a hierarchical backbone~\citep{Hatamizadeh_2025_CVPR}, introduces locally bidirectional recurrence without an additional global reverse scan~\citep{zhang2025lbmamba}, and explores second-order non-causal dynamics that eliminate directional scanning~\citep{ramachandran2026vision}. These methods make the state transition input-dependent, but their transition-generation rule remains fixed: tokens determine transitions through an outer-parameterized mapping unchanged within the image. 

\subsection{Test-Time Training in Vision}
TTT layers use an inner learner as their recurrent state and update it through self-supervised learning~\citep{pmlr-v267-sun25h}. The learner compresses input-derived key-value pairs into its parameters and produces representations from queries. In vision, ViT$^3$~\citep{Han_2026_CVPR} studies full-image inner adaptation, focusing on the learner architecture and inner training procedure. Vision-TTT~\citep{kong2026vision} adapts the learner recurrently along visual token sequences. Its dual-dataset strategy pairs forward and reverse token sequences, while convolutional preprocessing incorporates local spatial information. Together, they establish TTT-based visual backbones in both full-image and token-indexed forms. VisionHOPE shares within-image adaptation with these methods but differs in what evolves. TTT-based visual backbones adapt an inner learner while their representation maps and update dynamics remain largely outer-parameterized. VisionHOPE follows NL's self-referential construction by coupling content, key, value, learning-rate, and retention memories~\citep{NEURIPS2025_4309616a}. This coupling allows stored content and the learning rule to co-evolve with visual context.

\subsection{Nested Learning and Self-Modifying Systems}
Fast-weight models update short-term parameters from the current context to capture recent history~\citep{NIPS2016_9f44e956}, while HyperNetworks generate parameters of one network using another~\citep{ha2017hypernetworks}. Learned optimizers make update rules trainable~\citep{NIPS2016_fb875828}, and fast-weight programmers connect linear attention and recurrent sequence modeling with online associative memory~\citep{pmlr-v139-schlag21a,NEURIPS2021_3f9e3767}. Self-referential weight matrices generate their update quantities and modify themselves through the delta rule~\citep{irie2022modern}. 
Titans~\citep{NEURIPS2025_a4ca07aa} treats neural memory as a learner updated at test time to retain information over long contexts. These works establish key ingredients of adaptive computation: context-dependent parameters, trainable update rules, and memory implemented as a learning process during inference.

Building on these ideas, NL describes a model as a collection of interconnected learning processes, each compressing its context flow into an internal state~\citep{NEURIPS2025_4309616a}. Its self-referential construction couples five memories that store content, generate key and value representations, and govern learning rate and retention. This coupling allows stored content and the learning rule to co-evolve as context accumulates. NL also derives a chunk formulation and combines its self-referential component with a Continuum Memory System (CMS) to form the HOPE architecture. CMS updates its memory modules at different frequencies to retain information over multiple timescales. In NL's formulation, the learning signal for these modules comes from a task objective. Language modeling provides this signal through next-token prediction, whereas discriminative vision has no canonical token-wise target along a visual scan. Incorporating CMS into a visual backbone would therefore require choosing an additional online learning objective. VisionHOPE focuses on the self-referential component and adapts it to generic visual backbones. In vision, the unconstrained self-referential update creates a stability barrier, which VisionHOPE addresses with a step-size control scheme.

\section{Additional Methodological Details}
\label{app:method_details}

\subsection{Gradient Descent as a Dot-Product Associative Memory}
\label{app:gd_associative_memory}
For completeness, we expand the dot-product associative-memory formulation of gradient descent introduced by NL. Here, $t$ indexes the current element of the context flow, corresponding to a visual token in VisionHOPE, and $d$ is the shared key and value dimension. At step $t$, $W_{t-1}\in\R^{d\times d}$ is the existing memory state and $k_t\in\R^d$ is the current key. The output-space learning signal $g_t\in\R^d$ is the gradient of the local objective at the response $W_{t-1}k_t$, and $\eta_t>0$ is the learning rate. With $k_t$, $g_t$, and $\eta_t$ held fixed, the update in Equation~(\ref{eq:nl_gd_update}) is the solution to the proximal problem
\begin{equation}
W_t^{\mathrm{GD}}=\arg\min_{W\in\R^{d\times d}}\left[\left\langle Wk_t,g_t\right\rangle+\frac{1}{2\eta_t}\left\lVert W-W_{t-1}\right\rVert_F^2\right],
\label{eq:app_gd_dot_product_objective}
\end{equation}
where $W\in\R^{d\times d}$ is a candidate updated memory, and $\langle\cdot,\cdot\rangle$ is the Euclidean inner product. The dot-product term encourages the response $Wk_t$ to move along the write signal $-g_t$, while the quadratic penalty discourages large changes from $W_{t-1}$. Differentiating the objective with respect to $W$ gives
\begin{equation}
\nabla_W\left\langle Wk_t,g_t\right\rangle=g_tk_t^\top,\qquad \nabla_W\frac{1}{2\eta_t}\left\lVert W-W_{t-1}\right\rVert_F^2=\eta_t^{-1}(W-W_{t-1}).
\label{eq:app_gd_objective_gradients}
\end{equation}
Setting their sum to zero at the minimizer $W_t^{\mathrm{GD}}$ yields
\begin{equation}
g_tk_t^\top+\eta_t^{-1}(W_t^{\mathrm{GD}}-W_{t-1})=0\quad\Longrightarrow\quad W_t^{\mathrm{GD}}=W_{t-1}-\eta_tg_tk_t^\top,
\label{eq:app_gd_closed_form}
\end{equation}
which recovers Equation~(\ref{eq:nl_gd_update}). Since $\eta_t>0$, the objective is strictly convex and the minimizer is unique. Thus, $k_t$ specifies the write address and $-g_t$ provides the write signal. Replacing the dot-product term with an $L_2$ regression objective yields the DGD formulation derived in Appendix~\ref{app:dgd_proximal_derivation}.

\subsection{Proximal Derivation of Delta Gradient Descent}
\label{app:dgd_proximal_derivation}
We derive the recurrence in Equation~(\ref{eq:nl_dgd_update}) from the proximal $L_2$ objective in Equation~(\ref{eq:nl_dgd_proximal_objective}). Here, $u_t=-g_t$ is the regression target and $\lambda_t>0$ is the proximal parameter. With $k_t$, $u_t$, and $\lambda_t$ held fixed, differentiating the two terms of the objective with respect to the candidate memory $W$ gives
\begin{equation}
\nabla_W\frac{1}{2}\left\lVert Wk_t-u_t\right\rVert_2^2=\left(Wk_t-u_t\right)k_t^\top,\qquad \nabla_W\frac{1}{2\lambda_t}\left\lVert W-W_{t-1}\right\rVert_F^2=\lambda_t^{-1}\left(W-W_{t-1}\right).
\label{eq:app_dgd_objective_gradients}
\end{equation}
Since $\lambda_t>0$, the objective is strictly convex with respect to $W$. Setting the sum of these gradients to zero therefore gives the first-order condition for its unique minimizer $W_t^{\mathrm{DGD}}$:
\begin{equation}
\left(W_t^{\mathrm{DGD}}k_t-u_t\right)k_t^\top+\lambda_t^{-1}\left(W_t^{\mathrm{DGD}}-W_{t-1}\right)=0.
\label{eq:app_dgd_first_order}
\end{equation}
Multiplying by $\lambda_t$ and collecting the terms containing $W_t^{\mathrm{DGD}}$ gives
\begin{equation}
W_t^{\mathrm{DGD}}\left(I_d+\lambda_tk_tk_t^\top\right)=W_{t-1}+\lambda_tu_tk_t^\top.
\label{eq:app_dgd_linear_system}
\end{equation}
Here, $I_d\in\R^{d\times d}$ is the identity matrix. The matrix $I_d+\lambda_tk_tk_t^\top$ is positive definite and therefore invertible. Applying the Sherman-Morrison identity gives
\begin{equation}
\left(I_d+\lambda_tk_tk_t^\top\right)^{-1}=I_d-\frac{\lambda_t}{1+\lambda_t\lVert k_t\rVert_2^2}k_tk_t^\top=I_d-\eta_tk_tk_t^\top,
\label{eq:app_dgd_sherman_morrison}
\end{equation}
where $\eta_t=\lambda_t/(1+\lambda_t\lVert k_t\rVert_2^2)$ is the effective DGD step in Equation~(\ref{eq:nl_dgd_effective_step}). NL simplifies this coefficient for normalized keys, while we keep the key norm explicit to cover the unnormalized case. Right-multiplying both sides of Equation~(\ref{eq:app_dgd_linear_system}) by this inverse and using $\lambda_t(1-\eta_t\lVert k_t\rVert_2^2)=\eta_t$ gives
\begin{equation}
W_t^{\mathrm{DGD}}=W_{t-1}\left(I_d-\eta_tk_tk_t^\top\right)+\eta_tu_tk_t^\top=W_{t-1}-\eta_t\left(W_{t-1}k_t-u_t\right)k_t^\top.
\label{eq:app_dgd_proximal_closed_form}
\end{equation}
Substituting $u_t=-g_t$ into this expression recovers Equation~(\ref{eq:nl_dgd_update}).

\subsection{Properties of the Stability-Matched Step-Size Control}
\label{app:projection_properties}
This subsection establishes the token-wise guarantees of the stability-matched step-size control. We first characterize the soft injection cap, then prove complementary operator bounds for the retained memory transition and self-referential injection, and finally combine them to obtain token-wise non-expansion. We conclude with the finite-precision safeguards used in the practical implementation.

Since $\eta_t>0$ and $\eta_t^{\mathrm{inj}}>0$, their ratio is positive, giving $0<1-\exp(-\eta_t/\eta_t^{\mathrm{inj}})<1$ and hence $0<\bar\eta_t<\eta_t^{\mathrm{inj}}$. Differentiating with respect to the raw step while holding $\eta_t^{\mathrm{inj}}$ fixed gives
\begin{equation}
\frac{\partial\bar\eta_t}{\partial\eta_t}=\exp\left(-\frac{\eta_t}{\eta_t^{\mathrm{inj}}}\right)\in(0,1).
\label{eq:app_soft_cap_derivative}
\end{equation}
Thus, the soft injection cap is smooth and strictly increasing in $\eta_t$. Expanding around $\eta_t=0$ yields
\begin{equation}
\bar\eta_t=\eta_t-\frac{\eta_t^2}{2\eta_t^{\mathrm{inj}}}+O\left(\frac{\eta_t^3}{(\eta_t^{\mathrm{inj}})^2}\right),
\label{eq:app_soft_cap_expansion}
\end{equation}
which proves first-order preservation of small raw steps. The spectral clamp in Equation~(\ref{eq:visionhope_spectral_clamp}) is piecewise smooth and differentiable almost everywhere with respect to its two scalar inputs.

\begin{proof}[Proof of Proposition~\ref{prop:visionhope_operator_bounds}]
If $k_t=0$, then $A_t=\alpha_tI_d$ and $B_t=0$, so both required operator bounds hold directly. It therefore remains to consider $k_t\neq0$. With $\kappa_t=\lVert k_t\rVert_2^2$, the spectral clamp ensures
\begin{equation}
0\leq\widetilde\eta_t\kappa_t\leq\eta_t^{\mathrm{spec}}\kappa_t=2\alpha_t.
\label{eq:app_spectral_step_bound}
\end{equation}
The rank-one matrix $k_tk_t^\top$ has eigenvalue $\kappa_t$ along $k_t$ and zero on its orthogonal complement. Consequently, the retained transition $A_t=\alpha_tI_d-\widetilde\eta_tk_tk_t^\top$ has eigenvalue $\alpha_t-\widetilde\eta_t\kappa_t$ along $k_t$ and $\alpha_t$ in every orthogonal direction. Its spectral bound is therefore equivalent to the following condition:
\begin{equation}
\lVert A_t\rVert_2\leq\alpha_t\quad\Longleftrightarrow\quad\left|\alpha_t-\widetilde\eta_t\kappa_t\right|\leq\alpha_t\quad\Longleftrightarrow\quad0\leq\widetilde\eta_t\kappa_t\leq2\alpha_t.
\label{eq:app_retained_transition_equivalence}
\end{equation}
Together with Equation~(\ref{eq:app_spectral_step_bound}), this proves $\lVert A_t\rVert_2\leq\alpha_t$. For the self-referential injection, the rank-one matrix $\delta_tk_t^\top$ has spectral norm $\lVert\delta_t\rVert_2\lVert k_t\rVert_2$, giving
\begin{equation}
\lVert B_t\rVert_2=\widetilde\eta_t\lVert\delta_t\rVert_2\lVert k_t\rVert_2\leq\bar\eta_t\lVert\delta_t\rVert_2\lVert k_t\rVert_2\leq\eta_t^{\mathrm{inj}}\lVert\delta_t\rVert_2\lVert k_t\rVert_2\leq(1-\alpha_t)\frac{\lVert\delta_t\rVert_2}{r_t}<1-\alpha_t.
\label{eq:app_injection_bound_proof}
\end{equation}
Here, the first two inequalities follow from $\widetilde\eta_t\leq\bar\eta_t<\eta_t^{\mathrm{inj}}$, the third uses $\lVert k_t\rVert_2\leq1$ and $\eta_t^{\mathrm{inj}}=(1-\alpha_t)/r_t$, and the final inequality is strict because $r_t>\lVert\delta_t\rVert_2$. The argument also covers $\delta_t=0$, for which $B_t=0$. This proves the strict bound $\lVert B_t\rVert_2<1-\alpha_t$ at every token.
\end{proof}

\begin{proof}[Proof of Corollary~\ref{cor:visionhope_token_nonexpansion}]
By Proposition~\ref{prop:visionhope_operator_bounds} and the triangle inequality,
\begin{equation}
\lVert T_t\rVert_2\leq\lVert A_t\rVert_2+\lVert B_t\rVert_2<\alpha_t+(1-\alpha_t)=1.
\label{eq:app_complete_transition_bound}
\end{equation}
Since $M_t^\square=M_{t-1}^\square T_t$, submultiplicativity gives
\begin{equation}
\lVert M_t^\square\rVert_F\leq\lVert M_{t-1}^\square\rVert_F\lVert T_t\rVert_2\leq\lVert M_{t-1}^\square\rVert_F.
\label{eq:app_token_memory_nonexpansion}
\end{equation}
Applying this argument at every token proves non-expansion along the fully token-indexed trajectory, so each memory-state norm remains bounded by its initial value. The result applies to all five memories, with the learning-rate and retention memories treated as $1\times d$ linear maps.
\end{proof}

In finite precision, VisionHOPE applies fixed multiplicative inward factors of $1-10^{-3}$ and $1-10^{-6}$ to the injection and spectral limits, respectively. The squared key norm in the denominator of $\eta_t^{\mathrm{spec}}$ is explicitly lower-bounded by $10^{-20}$, and the resulting spectral limit is rounded to the next representable value toward zero. These safeguards make the implemented limits slightly more conservative and provide additional margins against roundoff errors near both step-size limits.

\subsection{Chunk-Wise Stability of VisionHOPE}
\label{app:chunk_stability}
We apply the step-size control to the chunk-wise recurrence in Section~\ref{sec:nl_chunk_formulation}. Fix a chunk spanning tokens $b+1,\ldots,b+C$, where $b$ is a nonnegative multiple of $C$. For these tokens, $C\times\left\lfloor\frac{t-1}{C}\right\rfloor=b$, so the boundary state of memory $\square\in\{m,k,v,\eta,\alpha\}$ is $M_b^\square$. Let $\delta_t=k_t-v_t$ and obtain $\widetilde\eta_t$ by applying Equations~(\ref{eq:visionhope_soft_projection}) and~(\ref{eq:visionhope_spectral_clamp}) to the quantities generated from these boundary states. We use $A_t$ and $B_t$ defined in Equation~(\ref{eq:visionhope_projected_transition}), whose bounds in Proposition~\ref{prop:visionhope_operator_bounds} remain valid for these quantities. Because $g_t^\square=M_b^\square\delta_t$, substituting these quantities and the executed step $\widetilde\eta_t$ into Equation~(\ref{eq:nl_online_srdgd}) gives
\begin{equation}
M_t^\square=M_{t-1}^\square A_t+M_b^\square B_t,\qquad t=b+1,\ldots,b+C.
\label{eq:app_projected_chunk_recurrence}
\end{equation}

\begin{proposition}[Shared right-multiplicative closure]
\label{prop:app_shared_gain}
For this fixed chunk, every memory $\square\in\{m,k,v,\eta,\alpha\}$ admits the following shared right-multiplicative representation:
\begin{equation}
M_t^\square=M_b^\square G_t,\qquad G_b=I_d,\qquad G_t=G_{t-1}A_t+B_t,
\label{eq:app_shared_gain_recurrence}
\end{equation}
where the recurrence holds for $t=b+1,\ldots,b+C$ and the same gain $G_t\in\R^{d\times d}$ is shared by all five memories. The gain is defined relative to the boundary of this fixed chunk.
\end{proposition}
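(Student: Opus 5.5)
The plan is a finite induction on $t$ within the fixed chunk, starting from the chunk-wise recurrence in Equation~(\ref{eq:app_projected_chunk_recurrence}). The essential structural observation is that, inside the chunk, every token-dependent quantity entering $A_t$ and $B_t$ is the same for all five memories. These quantities are $k_t$, $v_t$, $\delta_t$, $\alpha_t$, and the executed step $\widetilde\eta_t$. They are generated from the boundary states $M_b^k$, $M_b^v$, $m_b^\eta$, $m_b^\alpha$ and the token $x_t$, and they do not depend on the working states $M_{t-1}^\square$ or on the index $\square$. Hence the matrices $A_t$ and $B_t$ in Equation~(\ref{eq:visionhope_projected_transition}) are common to all memories. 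I will state this first, since it is what makes a single gain $G_t$ possible.

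Next, define $G_t$ by the recurrence $G_b=I_d$ and $G_t=G_{t-1}A_t+B_t$ for $t=b+1,\ldots,b+C$. This object is well defined and independent of $\square$, because only the shared quantities enter it. Then prove $M_t^\square=M_b^\square G_t$ by induction on $t$.

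\emph{Base case.} At $t=b$, we have $M_b^\square=M_b^\square I_d$.

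\emph{Inductive step.} Assume $M_{t-1}^\square=M_b^\square G_{t-1}$. Substituting into Equation~(\ref{eq:app_projected_chunk_recurrence}) gives
$M_t^\square=M_b^\square G_{t-1}A_t+M_b^\square B_t=M_b^\square\left(G_{t-1}A_t+B_t\right)=M_b^\square G_t$,
using only associativity and left distributivity of matrix multiplication.

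For the row-vector memories $m^\eta$ and $m^\alpha$, the same computation holds verbatim, since they act as $1\times d$ maps right-multiplied by $d\times d$ matrices.

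The only step needing care, and the one I expect to be the main obstacle, is justifying Equation~(\ref{eq:app_projected_chunk_recurrence}) itself in the form used. In particular, the self-referential signal must be $g_t^\square=M_b^\square\delta_t$, evaluated at the boundary state rather than the working state, as specified by Equation~(\ref{eq:nl_chunk_target_signal}). This gives $g_t^\square=M_b^\square k_t-M_b^\square v_t$. The term $-\widetilde\eta_t g_t^\square k_t^\top$ then becomes $M_b^\square B_t$, while the retained term $M_{t-1}^\square(\alpha_tI_d-\widetilde\eta_tk_tk_t^\top)$ becomes $M_{t-1}^\square A_t$. I will re-verify this substitution explicitly so that the affine-in-$M_b^\square$ structure is clear.

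Once the substitution is verified, the right-multiplicative closure is immediate. Shared gain across memories follows from the $\square$-independence noted above.
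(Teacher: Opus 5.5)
Your proposal is correct and matches the paper's proof. Both run the same induction from $G_b=I_d$, substitute the inductive hypothesis to get $M_t^\square=M_b^\square(G_{t-1}A_t+B_t)=M_b^\square G_t$, and derive sharing of the gain from the $\square$-independence of $A_t$ and $B_t$. Your explicit check that $g_t^\square=M_b^\square\delta_t$ yields $M_t^\square=M_{t-1}^\square A_t+M_b^\square B_t$ mirrors the derivation the paper gives just before stating the proposition.
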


\begin{proof}
The claim holds at $t=b$ because $M_b^\square=M_b^\square I_d$. For the inductive step, suppose $M_{t-1}^\square=M_b^\square G_{t-1}$. Substituting this relation into Equation~(\ref{eq:app_projected_chunk_recurrence}) gives
\begin{equation}
M_t^\square=M_b^\square\left(G_{t-1}A_t+B_t\right)=M_b^\square G_t.
\label{eq:app_shared_gain_induction}
\end{equation}
The matrices $A_t$ and $B_t$ do not depend on $\square$, so the resulting gain is shared by all memories.
\end{proof}

\begin{theorem}[Non-expansion of the chunk-wise recurrence]
\label{thm:app_chunk_nonexpansion}
Under the stability-matched step-size control, every within-chunk gain and memory state satisfies
\begin{equation}
\lVert G_t\rVert_2\leq1,\qquad \lVert M_t^\square\rVert_F\leq\lVert M_b^\square\rVert_F,\qquad t=b,\ldots,b+C.
\label{eq:app_chunk_nonexpansion}
\end{equation}
\end{theorem}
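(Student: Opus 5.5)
We argue by induction on $t$ within the fixed chunk, working with the shared gain from Proposition~\ref{prop:app_shared_gain} rather than with the memories directly. The reason is that the chunk-wise recurrence in Equation~(\ref{eq:app_projected_chunk_recurrence}) is not a product of one-step transitions. The injection term $M_b^\square B_t$ acts on the fixed boundary state, not on $M_{t-1}^\square$. So Corollary~\ref{cor:visionhope_token_nonexpansion} cannot be applied token by token. The gain representation $M_t^\square=M_b^\square G_t$ moves all of the within-chunk dynamics into a single matrix recurrence $G_t=G_{t-1}A_t+B_t$, which does not depend on $\square$. It therefore suffices to bound $\lVert G_t\rVert_2$ once and then transfer the bound to all five memories.

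The base case is $t=b$, where $G_b=I_d$ gives $\lVert G_b\rVert_2=1$. For the inductive step, assume $\lVert G_{t-1}\rVert_2\leq1$ for some $t\in\{b+1,\ldots,b+C\}$. The quantities $k_t,v_t,\alpha_t$ are generated from the boundary states, but $\widetilde\eta_t$ is still produced by Definition~\ref{def:stability_matched_projection}, with $\lVert k_t\rVert_2\leq1$ after normalization. Hence Proposition~\ref{prop:visionhope_operator_bounds} applies verbatim and gives $\lVert A_t\rVert_2\leq\alpha_t$ and $\lVert B_t\rVert_2<1-\alpha_t$; the paper already notes this before Equation~(\ref{eq:app_projected_chunk_recurrence}). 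The triangle inequality and submultiplicativity of the spectral norm then give
\begin{equation*}
\lVert G_t\rVert_2\leq\lVert G_{t-1}\rVert_2\lVert A_t\rVert_2+\lVert B_t\rVert_2<\alpha_t+(1-\alpha_t)=1,
\end{equation*}
which closes the induction. In fact the bound is strict for every $t>b$.

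For the state bound, we use $\lVert XY\rVert_F\leq\lVert X\rVert_F\lVert Y\rVert_2$ together with Proposition~\ref{prop:app_shared_gain}. This gives $\lVert M_t^\square\rVert_F=\lVert M_b^\square G_t\rVert_F\leq\lVert M_b^\square\rVert_F\lVert G_t\rVert_2\leq\lVert M_b^\square\rVert_F$. For the learning-rate and retention memories, viewed as $1\times d$ maps, the Frobenius norm is just the Euclidean norm and the same inequality holds.

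The main obstacle is the structural point made in the first paragraph: one must bound the affine gain recurrence instead of iterating per-token transitions, since a naive bound on $M_{t-1}^\square A_t+M_b^\square B_t$ mixes two different states. Once the gain reformulation is used, the key fact is that the inductive hypothesis $\lVert G_{t-1}\rVert_2\leq1$ exactly absorbs the retained factor $\alpha_t$, leaving the margin $1-\alpha_t$ for the injection term. This is precisely the complementary split that Proposition~\ref{prop:visionhope_operator_bounds} provides. Because the final state $M_{b+C}^\square$ becomes the next boundary state, chaining this bound across chunks also gives non-expansion along the entire scan.
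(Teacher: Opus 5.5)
Your proof is correct and uses the same ingredients as the paper: the shared gain of Proposition~\ref{prop:app_shared_gain}, the complementary bounds of Proposition~\ref{prop:visionhope_operator_bounds}, and Frobenius--spectral submultiplicativity. The only difference is presentational: the paper unrolls $G_t$ and telescopes $\prod_{s=b+1}^{t}\alpha_s+\sum_{j=b+1}^{t}(1-\alpha_j)\prod_{s=j+1}^{t}\alpha_s=1$, whereas your one-step induction $\lVert G_t\rVert_2\leq\lVert G_{t-1}\rVert_2\alpha_t+\lVert B_t\rVert_2<1$ is the recursive form of that same identity.
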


\begin{proof}
The case $t=b$ follows from $G_b=I_d$. For $b<t\leq b+C$, Proposition~\ref{prop:app_shared_gain} reduces all five memory trajectories to the shared-gain recurrence in Equation~(\ref{eq:app_shared_gain_recurrence}). Unrolling this recurrence gives
\begin{equation}
G_t=A_{b+1}\cdots A_t+\sum_{j=b+1}^{t}B_jA_{j+1}\cdots A_t,
\label{eq:app_chunk_gain_unrolling}
\end{equation}
where empty matrix and scalar products are understood as $I_d$ and $1$, respectively, and the multiplication order is inherited from the recurrence. Applying submultiplicativity and Proposition~\ref{prop:visionhope_operator_bounds} yields
\begin{equation}
\lVert G_t\rVert_2<\prod_{s=b+1}^{t}\alpha_s+\sum_{j=b+1}^{t}(1-\alpha_j)\prod_{s=j+1}^{t}\alpha_s.
\label{eq:app_chunk_gain_bound}
\end{equation}
The inequality is strict because the final injection term, corresponding to $j=t$, retains the strict bound $\lVert B_t\rVert_2<1-\alpha_t$. The accumulated injection terms telescope because
\begin{equation}
(1-\alpha_j)\prod_{s=j+1}^{t}\alpha_s=\prod_{s=j+1}^{t}\alpha_s-\prod_{s=j}^{t}\alpha_s.
\label{eq:app_chunk_telescoping_term}
\end{equation}
Therefore,
\begin{equation}
\prod_{s=b+1}^{t}\alpha_s+\sum_{j=b+1}^{t}(1-\alpha_j)\prod_{s=j+1}^{t}\alpha_s=1,
\label{eq:app_chunk_telescoping_identity}
\end{equation}
and Equation~(\ref{eq:app_chunk_gain_bound}) gives $\lVert G_t\rVert_2<1$ for every $b<t\leq b+C$. Finally, $M_t^\square=M_b^\square G_t$ and Frobenius-spectral submultiplicativity give $\lVert M_t^\square\rVert_F\leq\lVert M_b^\square\rVert_F\lVert G_t\rVert_2\leq\lVert M_b^\square\rVert_F$. In particular, the state at the end of the chunk satisfies $\lVert M_{b+C}^\square\rVert_F\leq\lVert M_b^\square\rVert_F$. This state becomes the boundary state for the next chunk, where the same argument applies with a new gain initialized to $I_d$. Repeating the argument bounds the norm of every memory state along the scan by that of its initial state.
\end{proof}

\subsection{VisionHOPE Architecture Details}
\label{app:visionhope_architecture_details}
Section~\ref{sec:visionhope_architecture} presents the structural form of the VisionHOPE operator and block. In the reported models, the block input first undergoes Conditional Positional Encoding (CPE)~\citep{chu2023conditional} and pre-normalization before a pointwise input projection maps it to the internal width. The $\operatorname{DW}$ operation in Equation~(\ref{eq:visionhope_simplified_block}) is implemented as a gated depthwise-convolutional residual branch. It supplies local visual context and produces the feature map $Z$ used by the VisionHOPE operator.

Along each route, we divide the $D_m$ channels of $Z_r$ into $n_h=D_m/d$ heads of dimension $d$. Each head maintains an independent five-memory SRNL state for each direction. In the multi-head implementation, we apply a $D_m\times D_m$ outer query projection once to $Z$ and reuse the resulting queries across all four scan directions. For each direction, we reorder the queries into scan order, split them into heads, and apply $L_2$ normalization within each head before content readout.

After the recurrent scans, the head-wise content reads are concatenated and combined with a learned channel-wise skip connection from $Z_r$ to form $Y_r$. The sequences are restored to the feature grid and fused using the direction scales in Equation~(\ref{eq:visionhope_directional_fusion}), yielding $Y$. A depthwise local positional branch processes $Z$ and adds its output to $Y$, forming a skip connection around the VisionHOPE operator. A pointwise output projection then maps the combined output back to the original width $D$.

At the block level, both residual branches use LayerNorm, with stochastic depth applied during training. The hierarchical backbone arranges VisionHOPE blocks across progressively downsampled stages, whereas the plain backbone stacks them at a single resolution. Row- and column-aligned chunk lengths follow each stage's feature resolution. Appendix~\ref{app:model_configurations} provides model configurations.

\section{Additional Experimental Details}
\label{app:additional_experiments}

\subsection{Model Configurations}
\label{app:model_configurations}

\paragraph{Hierarchical models.}
VisionHOPE-T/S/B distribute VisionHOPE blocks across four progressively downsampled stages. Model scale is determined by the stage widths and depths. The FFN expansion ratio is set to $4$ throughout. Table~\ref{tab:visionhope_hierarchical_configs} specifies the complete hierarchical configurations.

\begin{table}[t]
\centering
\caption{Configurations of hierarchical VisionHOPE models. For an input of size $H\times W$, $\downarrow n$ denotes downsampling by a factor of $n$. $\mathrm{B}(D,D_m,n_h)\!\times\!L$ denotes $L$ repeated VisionHOPE blocks with block width $D$, internal width $D_m$, and $n_h$ heads per direction. The head dimension is $d=16$.}
\label{tab:visionhope_hierarchical_configs}
\footnotesize
\setlength{\tabcolsep}{9.3pt}
\renewcommand{\arraystretch}{1.05}
\belowrulesep=0pt
\aboverulesep=0pt
\begin{tabular}{c|c|c|c|c}
\toprule
 & \textbf{Size} & \textbf{VisionHOPE-T} & \textbf{VisionHOPE-S} & \textbf{VisionHOPE-B} \\
\midrule
\multirow[c]{2}{*}{Stage 1} & \multirow[c]{2}{*}{$\left(\frac{H}{4}\times \frac{W}{4}\right)$} & Stem $\downarrow\!4$ & Stem $\downarrow\!4$ & Stem $\downarrow\!4$ \\
& & $\mathrm{B}(64,32,2)\!\times\!3$ & $\mathrm{B}(64,32,2)\!\times\!4$ & $\mathrm{B}(96,48,3)\!\times\!4$ \\
\midrule
\multirow[c]{2}{*}{Stage 2} & \multirow[c]{2}{*}{$\left(\frac{H}{8}\times \frac{W}{8}\right)$} & Down $\downarrow\!2$ & Down $\downarrow\!2$ & Down $\downarrow\!2$ \\
& & $\mathrm{B}(128,64,4)\!\times\!4$ & $\mathrm{B}(160,80,5)\!\times\!8$ & $\mathrm{B}(192,96,6)\!\times\!8$ \\
\midrule
\multirow[c]{2}{*}{Stage 3} & \multirow[c]{2}{*}{$\left(\frac{H}{16}\times \frac{W}{16}\right)$} & Down $\downarrow\!2$ & Down $\downarrow\!2$ & Down $\downarrow\!2$ \\
& & $\mathrm{B}(256,128,8)\!\times\!18$ & $\mathrm{B}(320,160,10)\!\times\!25$ & $\mathrm{B}(448,224,14)\!\times\!25$ \\
\midrule
\multirow[c]{2}{*}{Stage 4} & \multirow[c]{2}{*}{$\left(\frac{H}{32}\times \frac{W}{32}\right)$} & Down $\downarrow\!2$ & Down $\downarrow\!2$ & Down $\downarrow\!2$ \\
& & $\mathrm{B}(512,256,16)\!\times\!4$ & $\mathrm{B}(512,256,16)\!\times\!8$ & $\mathrm{B}(640,320,20)\!\times\!8$ \\
\midrule
Classifier & -- & \multicolumn{3}{c}{Global average pooling, linear} \\
\bottomrule
\end{tabular}
\end{table}

\paragraph{Plain models.}
P-VisionHOPE-T/S/B retain a single feature resolution throughout the backbone after the convolutional stem. We scale these models by increasing the block width and total depth. The FFN expansion ratio is set to $4$ throughout. Table~\ref{tab:visionhope_plain_configs} specifies the complete plain architectures.

\begin{table}[t]
\centering
\caption{Configurations of plain VisionHOPE models. For an input of size $H\times W$, Stem $\downarrow16$ produces a feature map at $\left(\frac{H}{16}\times \frac{W}{16}\right)$. $\mathrm{B}(D,D_m,n_h)\!\times\!L$ denotes $L$ repeated VisionHOPE blocks with block width $D$, internal width $D_m$, and $n_h$ heads per direction. The head dimension is $d=16$.}
\label{tab:visionhope_plain_configs}
\footnotesize
\setlength{\tabcolsep}{9.5pt}
\renewcommand{\arraystretch}{1.05}
\belowrulesep=0pt
\aboverulesep=0pt
\begin{tabular}{c|c|c|c|c}
\toprule
 & \textbf{Size} & \textbf{P-VisionHOPE-T} & \textbf{P-VisionHOPE-S} & \textbf{P-VisionHOPE-B} \\
\midrule
\multirow[c]{2}{*}{Backbone} & \multirow[c]{2}{*}{$\left(\frac{H}{16}\times \frac{W}{16}\right)$} & Stem $\downarrow\!16$ & Stem $\downarrow\!16$ & Stem $\downarrow\!16$ \\
& & $\mathrm{B}(192,96,6)\!\times\!12$ & $\mathrm{B}(384,192,12)\!\times\!13$ & $\mathrm{B}(768,384,24)\!\times\!14$ \\
\midrule
Classifier & -- & \multicolumn{3}{c}{Global average pooling, linear} \\
\bottomrule
\end{tabular}
\end{table}

\paragraph{Shared VisionHOPE settings.}
At $224\times224$ resolution, row and column chunk lengths coincide: $(56,28,14,7)$ for the four hierarchical stages and $14$ for the plain models. At other resolutions, row chunks span the feature map's width and column chunks its height. For the scalar outputs $s_t^\eta$ and $s_t^\alpha$ of the learning-rate and retention memories, the maps in Equations~(\ref{eq:nl_self_referential_quantities}) and~(\ref{eq:nl_chunk_generated_quantities}) are given by
\begin{equation}
\eta_t=\phi_\eta(s_t^\eta)=\gamma_\eta\,\operatorname{softplus}(s_t^\eta),\qquad \alpha_t=\phi_\alpha(s_t^\alpha)=\sigma\!\left(s_t^\alpha+\log\!\frac{\alpha_{\mathrm{init}}}{1-\alpha_{\mathrm{init}}}\right).
\label{eq:app_visionhope_scalar_maps}
\end{equation}
Here, $\sigma$ denotes the logistic sigmoid, $\gamma_\eta>0$ is the raw-step scale, and $\alpha_{\mathrm{init}}\in(0,1)$ is the initial retention value. We set $\gamma_\eta=0.025$ and $\alpha_{\mathrm{init}}=0.9$. The stability analysis requires no additional property of these scalar maps beyond $\eta_t>0$ and $\alpha_t\in[0,1)$. Their particular parameterization therefore does not alter the complementary operator bounds or the resulting non-expansion guarantees for the coupled memory dynamics. Before outer training, $M^m_0$, $M^k_0$, and $M^v_0$ are initialized from a truncated normal distribution with standard deviation $0.01$, whereas $m^\eta_0$ and $m^\alpha_0$ are initialized to zero. Zero-initializing the scalar memories does not make the generated step zero: the initial raw step and retention factor are $\gamma_\eta\log 2$ and $\alpha_{\mathrm{init}}$, respectively. Setting $\alpha_{\mathrm{init}}=0.9$ preserves most of the existing memory while leaving the initial stability margin $1-\alpha_{\mathrm{init}}=0.1$ for self-referential injection. Stabilized key normalization uses $k_t=k_t^{\mathrm{raw}}/\sqrt{\lVert k_t^{\mathrm{raw}}\rVert_2^2+10^{-6}}$, where $k_t^{\mathrm{raw}}$ denotes the key memory output before normalization. The step-size control in Definition~\ref{def:stability_matched_projection} uses $\epsilon=10^{-6}$.

\subsection{Training Protocols}
\label{app:implementation_details}

\paragraph{Image classification.}
We train our ImageNet models with AdamW~\citep{loshchilov2019decoupled}. The $300$-epoch schedule begins with $20$ epochs of linear learning-rate warmup, followed by cosine decay. The learning rate is scaled linearly from $10^{-3}$ at a global batch size of $1024$. The initial warmup and minimum learning rates are $10^{-6}$ and $10^{-5}$, respectively. The hierarchical models undergo $10$ additional training epochs at the minimum learning rate. The stochastic-depth rates are $(0.2,0.4,0.4)$ for VisionHOPE-T/S/B and $(0,0.1,0.5)$ for P-VisionHOPE-T/S/B. The hierarchical models additionally use MESA training~\citep{NEURIPS2022_948b1c9d}. Beginning at epoch $75$, we use an exponential moving average teacher with decay $0.9998$ to provide soft class targets for the student model. The corresponding soft cross-entropy loss is added to the classification objective with weights $1.0$, $1.5$, and $1.5$ for the Tiny, Small, and Base models. The plain models are trained without MESA.

\paragraph{Object detection and instance segmentation.}
We initialize each Mask R-CNN model from its corresponding ImageNet-pretrained hierarchical backbone and fine-tune it with AdamW and standard multi-scale augmentation. The base learning rate is $4\times10^{-4}$ for a global batch size of $64$ and is scaled linearly when the batch size changes. The $1\times$ and $3\times$ schedules run for $12$ and $36$ epochs, respectively, with linear warmup over the first $1000$ iterations and tenfold learning-rate reductions at epochs $(8,11)$ and $(27,33)$. The stochastic-depth rates are $(0.2,0.3,0.3)$ for VisionHOPE-T/S/B under the $1\times$ schedule and $(0.2,0.4)$ for the Tiny and Small models under the $3\times$ schedule.

\paragraph{Semantic segmentation.}
We initialize UPerNet from the corresponding ImageNet-pretrained hierarchical backbones and optimize each model for $160$K iterations with AdamW and a base learning rate of $6\times10^{-5}$. Training uses a global batch size of $16$, linear warmup over the first $1500$ iterations, and polynomial learning-rate decay with power $1.0$. We follow the standard ADE20K data augmentation and use stochastic-depth rates of $(0.2,0.4,0.4)$ for VisionHOPE-T/S/B, respectively.

\subsection{Efficiency Analysis Details}
\label{app:efficiency_details}
\paragraph{Computational complexity.}
Equation~(\ref{eq:visionhope_computational_complexity}) directly compares the theoretical per-layer interaction costs for $N$ tokens of width $D$. These estimates include the main linear projections and attention or recurrent computations, isolating token interaction from FFNs, local convolutions, and auxiliary block operations. 
In DeiT, the $4ND^2$ term comes from the standard query, key, value, and output projections, while $2N^2D$ accounts for constructing the dense attention matrix and applying it to the values.
For Vim, let $E$ denote the expanded inner width, $S$ the hidden state dimension, and $R$ the number of recurrent directions. The input and output projections contribute $2NDE$ and $NDE$, respectively, while its recurrent selective state-space computation contributes $9RNES$. Thus,
\begin{equation}
\mathcal C(\mathrm{Vim})=3NDE+9RNES.
\label{eq:app_vim_general_interaction_cost}
\end{equation}
The Vim configurations considered here use $E=2D$, $S=16$, and $R=2$, yielding
\begin{equation}
\mathcal C(\mathrm{Vim})=6ND^2+576ND.
\label{eq:app_vim_interaction_cost}
\end{equation}
For VisionHOPE, let $D_m$ denote the internal width, $d$ the memory head dimension, $n_h=D_m/d$ the number of heads per direction, and $C_r$ the chunk length along route $r\in\mathcal D$. The input and output projections, the outer query projection, and the SRNL instances applied along the four routes give
\begin{equation}
\mathcal C(\mathrm{VisionHOPE})=2NDD_m+ND_m^2+4Nn_h(6d^2+20d)+n_hN(3d^3+2d^2)\sum_{r\in\mathcal D}\frac{1}{C_r}.
\label{eq:app_visionhope_general_interaction_cost}
\end{equation}
The first term counts the input and output projections, and the second counts the shared outer query projection applied once to the internal feature map. 
The third covers the generation of token-dependent quantities, the shared-gain recurrence with the executed step, and content reads across the four routes, while the final term accounts for updating the five memory states at chunk boundaries. The evaluated models use $D_m=D/2$, $d=16$, and $n_h=D/32$. For an $H\times W$ map with $N=HW$, the row-major routes use $C_r=W$ and the column-major routes use $C_r=H$, making the boundary term $800D(H+W)$. For square maps with $H=W=\sqrt N$, Equation~(\ref{eq:app_visionhope_general_interaction_cost}) becomes
\begin{equation}
\mathcal C(\mathrm{VisionHOPE})=\frac{5}{4}ND^2+232ND+1600D\sqrt N.
\label{eq:app_visionhope_interaction_cost}
\end{equation}

\paragraph{Memory complexity.}
Equation~(\ref{eq:visionhope_memory_complexity}) compares the theoretical per-layer activation-memory requirements for batch size $B$. DeiT stores $O(BND)$ intermediate token representations and, when materialized explicitly, an $O(BN^2)$ attention matrix. Using the same $E$, $S$, and $R$ defined above, the token and direction-wise state-space activations in Vim give the following memory bound
\begin{equation}
\mathcal M(\mathrm{Vim})=O(BND+BRNES).
\label{eq:app_vim_general_memory_complexity}
\end{equation}
With $E=2D$, $S=16$, and $R=2$, this reduces to the linear bound
\begin{equation}
\mathcal M(\mathrm{Vim})=O(BND).
\label{eq:app_vim_memory_complexity}
\end{equation}
For VisionHOPE, an upper bound on intermediate activation memory including token representations, directional token-dependent quantities, and chunk boundary storage is given by
\begin{equation}
\mathcal M(\mathrm{VisionHOPE})=O\!\left(BND+B|\mathcal D|ND_m+Bn_hNd^2\sum_{r\in\mathcal D}\frac{1}{C_r}\right).
\label{eq:app_visionhope_general_memory_complexity}
\end{equation}
The first two terms account for token representations and directional buffers. The third term bounds the storage of $d\times d$ gains, with one gain per chunk for each head and scan direction. Each gain is shared by the five memories within the corresponding SRNL instance. Computing the query projection once does not change the asymptotic storage required for the directional quantities. Substituting the model settings $|\mathcal D|=4$, $D_m=D/2$, $d=16$, and $n_h=D/32$ gives $O(BND+BD(H+W))$ for an $H\times W$ feature map. For square feature maps, this bound further simplifies to
\begin{equation}
\mathcal M(\mathrm{VisionHOPE})=O(BND+BD\sqrt N)=O(BND).
\label{eq:app_visionhope_memory_complexity}
\end{equation}

\paragraph{Resolution-scaling measurements.}
We compare DeiT, Vim, and P-VisionHOPE models at input resolutions from $224\times224$ to $1280\times1280$. All measurements use a single NVIDIA A100 GPU, BF16, and a fixed batch size of $8$. Figure~\ref{fig:visionhope_efficiency_tiny_small} reports the Tiny and Small results, complementing the Base comparison in Figure~\ref{fig:visionhope_efficiency_base}. Across both model scales, P-VisionHOPE maintains linear computation and memory growth with token count. Its efficiency advantage over DeiT widens substantially with resolution, yielding markedly lower FLOPs and memory use together with higher throughput at large inputs. Compared with Vim, P-VisionHOPE maintains a comparable FLOP profile, achieves higher throughput across all tested resolutions, and uses less memory at high resolutions.

\begin{figure}[t]
\centering
\includegraphics[width=0.99\linewidth]{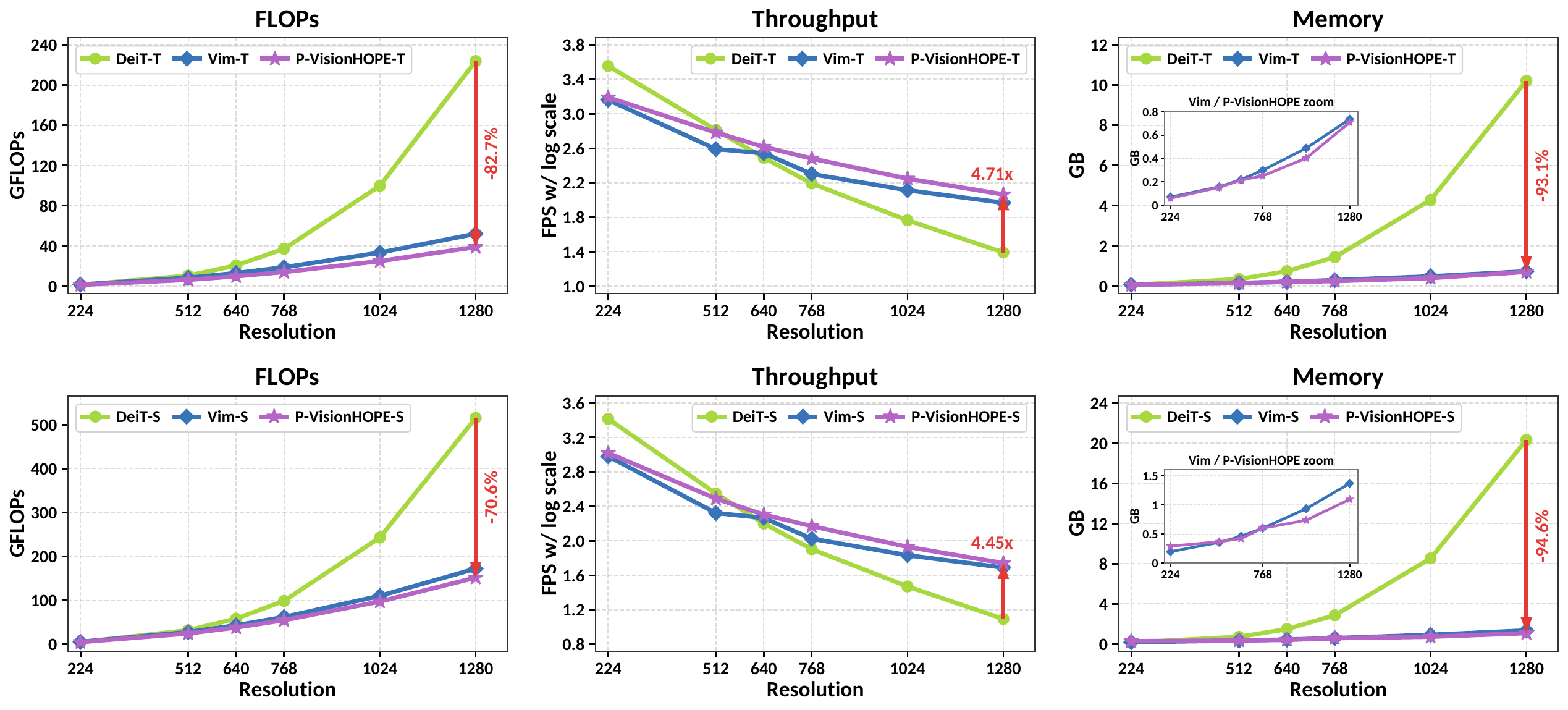}
\caption{Efficiency comparison among DeiT, Vim, and P-VisionHOPE at the Tiny and Small scales. We plot FLOPs, throughput, and memory footprint for input resolutions from $224\times224$ to $1280\times1280$. Hardware measurements use one NVIDIA A100 GPU with BF16 and a fixed batch size of $8$.}
\label{fig:visionhope_efficiency_tiny_small}
\end{figure}

\subsection{COCO Results under the $3\times$ Schedule}
\label{app:coco_3x_results}
We additionally evaluate VisionHOPE on COCO under the longer $3\times$ training schedule to determine whether its transfer performance persists with extended optimization. Table~\ref{tab:coco_maskrcnn_3x} reports complete bounding-box and mask AP metrics, including the corresponding $\mathrm{AP}_{50}$ and $\mathrm{AP}_{75}$ results, for the Tiny and Small backbones. VisionHOPE retains competitive performance at both scales.

\begin{table}[t]
\centering
\caption{COCO object detection and instance segmentation with Mask R-CNN under the $3\times$ schedule. FLOPs and AP are reported in G and \%. VisionHOPE delivers competitive performance.}
\label{tab:coco_maskrcnn_3x}
\scriptsize
\setlength{\tabcolsep}{1.1pt}
\renewcommand{\arraystretch}{1.03}
\belowrulesep=0pt
\aboverulesep=0pt

\begin{minipage}[t]{0.495\linewidth}
\centering
\resizebox{\linewidth}{!}{%
\begin{tabular}{lccccccc}
\toprule
\textbf{Method} & \textbf{FLOPs} & $\mathbf{AP}^{b}$ & $\mathbf{AP}^{b}_{50}$ & $\mathbf{AP}^{b}_{75}$ & $\mathbf{AP}^{m}$ & $\mathbf{AP}^{m}_{50}$ & $\mathbf{AP}^{m}_{75}$ \\
\midrule
ConvNeXt-T & 262 & 46.2 & 67.9 & 50.8 & 41.7 & 65.0 & 44.9 \\
InternImage-T & 270 & 49.1 & 70.4 & 54.1 & 43.7 & 67.3 & 47.3 \\
CSWin-T & 279 & 49.0 & 70.7 & 53.7 & 43.6 & 67.9 & 46.6 \\
MILA-T & 255 & 48.8 & 71.0 & 53.6 & 43.8 & 68.0 & 46.8 \\
VMamba-T & 271 & 48.8 & 70.4 & 53.5 & 43.7 & 67.4 & 47.0 \\
LocalVMamba-T & 291 & 48.7 & 70.1 & 53.0 & 43.4 & 67.0 & 46.4 \\
H-ViT$^{3}$-T & 271 & 48.9 & 71.0 & 53.4 & 44.0 & 68.0 & 47.5 \\
\rowcolor{VisionHOPEGreen}
{VisionHOPE-T} & {266} & {49.4} & {70.6} & {54.5} & {44.1} & {67.8} & {47.8} \\
\bottomrule
\end{tabular}
}
\end{minipage}\hfill
\begin{minipage}[t]{0.495\linewidth}
\centering
\resizebox{\linewidth}{!}{%
\begin{tabular}{lccccccc}
\toprule
\textbf{Method} & \textbf{FLOPs} & $\mathbf{AP}^{b}$ & $\mathbf{AP}^{b}_{50}$ & $\mathbf{AP}^{b}_{75}$ & $\mathbf{AP}^{m}$ & $\mathbf{AP}^{m}_{50}$ & $\mathbf{AP}^{m}_{75}$ \\
\midrule
ConvNeXt-S & 348 & 47.9 & 70.0 & 52.7 & 42.9 & 66.9 & 46.2 \\
InternImage-S & 340 & 49.7 & 71.1 & 54.5 & 44.5 & 68.5 & 47.8 \\
CSWin-S & 342 & 50.0 & 71.3 & 54.7 & 44.5 & 68.4 & 47.7 \\
MILA-S & 319 & 50.5 & 71.8 & 55.2 & 44.9 & 69.1 & 48.2 \\
VMamba-S & 349 & 49.9 & 70.9 & 54.7 & 44.2 & 68.2 & 47.7 \\
LocalVMamba-S & 414 & 49.9 & 70.5 & 54.4 & 44.1 & 67.8 & 47.4 \\
H-ViT$^{3}$-S & 349 & 50.5 & 72.0 & 55.5 & 45.0 & 69.1 & 48.8 \\
\rowcolor{VisionHOPEGreen}
{VisionHOPE-S} & {365} & {50.5} & {71.7} & {55.5} & {45.0} & {69.0} & {48.9} \\
\bottomrule
\end{tabular}
}
\end{minipage}
\end{table}

\subsection{Additional Ablation Studies}
\label{app:additional_ablations}
\paragraph{Step-size control behavior.}
To interpret the results of the step-size control ablations in Table~\ref{tab:visionhope_core_ablations}, we examine the step mappings and clamp activation rates. The soft injection cap is smooth and strictly increasing in the raw step $\eta_t$, so $\bar\eta_t$ preserves the graded modulation produced by the learning-rate memory while approaching the token-dependent limit $\eta_t^{\mathrm{inj}}$. The injection clamp $\min(\eta_t,\eta_t^{\mathrm{inj}})$ is active when $\eta_t/\eta_t^{\mathrm{inj}}>1$, a condition met by $49.68\%$ of the token-wise updates during inference with the P-VisionHOPE-S injection-clamp variant on the ImageNet-1K validation set. When active, it replaces $\eta_t$ with $\eta_t^{\mathrm{inj}}$ and removes direct dependence on raw-step magnitude, plausibly contributing to the lower accuracy. The spectral clamp is active when $\bar\eta_t/\eta_t^{\mathrm{spec}}=\bar\eta_t\lVert k_t\rVert_2^2/(2\alpha_t)>1$. In the full P-VisionHOPE-S model, this condition holds for only $0.0044\%$ of token-wise updates during inference on the same validation set and $0.0011\%$ across the first ten warmup epochs of the standard training schedule. Since the spectral clamp leaves candidate steps below this limit unchanged, it modifies only these rare updates while enforcing the retained-transition bound, consistent with the matching accuracies of the full model and the variant without it. The soft spectral cap instead applies a second smooth map, $\eta_t^{\mathrm{spec}}\bigl(1-\exp(-\bar\eta_t/\eta_t^{\mathrm{spec}})\bigr)$, to $\bar\eta_t$. Although this map respects the retained-transition bound and preserves small steps to first order, it modifies every positive candidate step, including those already within the limit. This additional remapping plausibly explains its lower accuracy. These comparisons help explain our choice of a soft injection cap and a spectral clamp.

\paragraph{Configuration sensitivity.}
We further examine four configuration choices using P-VisionHOPE-S. The ratio $D_m/D$ controls the operator's internal width, chunk length determines the boundary-state update frequency, $\gamma_\eta$ sets the scale of the raw step before step-size control, and $d$ specifies the memory head dimension. Table~\ref{tab:visionhope_parameter_ablations} varies each setting and reports its effect on model size, computation, and ImageNet-1K accuracy. Together, these ablations support the defaults used throughout our experiments, including the use of spatially aligned chunks that span a full row or column of visual tokens. Notably, the default chunk length $C=14$ yields a $2.86\times$ inference speedup over the fully token-indexed setting $C=1$, while maintaining comparable Top-1 accuracy. This gain highlights the practical importance of chunking for efficient recurrent computation in visual backbones.

\begin{table}[t]
\centering
\caption{Configuration ablations on P-VisionHOPE-S. The four panels vary $D_m/D$, chunk length, $\gamma_\eta$, and $d$, respectively. Parameters, FLOPs, and Top-1 accuracy are reported in M, G, and \%.}
\label{tab:visionhope_parameter_ablations}
\footnotesize
\setlength{\tabcolsep}{3.0pt}
\renewcommand{\arraystretch}{1.03}
\belowrulesep=0pt
\aboverulesep=0pt
\resizebox{\linewidth}{!}{%
\begin{tabular}[t]{cccc}
\toprule
\textbf{$D_m/D$} & \textbf{\#P} & \textbf{FLOPs} & \textbf{Acc.} \\
\midrule
$1/4$ & 20 & 4.2 & 81.5 \\
\rowcolor{VisionHOPEGreen}
\textbf{$1/2$} & 22 & 4.7 & 82.3 \\
$1$ & 27 & 6.1 & 82.3 \\
$2$ & 44 & 9.8 & 82.5 \\
\bottomrule
\end{tabular}%
\hspace{4pt}%
\begin{tabular}[t]{cccc}
\toprule
\textbf{Chunk} & \textbf{\#P} & \textbf{FLOPs} & \textbf{Acc.} \\
\midrule
$1$ & 22 & 6.2 & 82.4 \\
$7$ & 22 & 4.8 & 82.2 \\
\rowcolor{VisionHOPEGreen}
\textbf{$14$} & 22 & 4.7 & 82.3 \\
$28$ & 22 & 4.7 & 82.0 \\
\bottomrule
\end{tabular}%
\hspace{4pt}%
\begin{tabular}[t]{cccc}
\toprule
\textbf{$\gamma_\eta$} & \textbf{\#P} & \textbf{FLOPs} & \textbf{Acc.} \\
\midrule
$0.0125$ & 22 & 4.7 & 82.2 \\
\rowcolor{VisionHOPEGreen}
\textbf{$0.025$} & 22 & 4.7 & 82.3 \\
$0.05$ & 22 & 4.7 & 82.1 \\
$0.1$ & 22 & 4.7 & 82.1 \\
\bottomrule
\end{tabular}%
\hspace{4pt}%
\begin{tabular}[t]{cccc}
\toprule
\textbf{$d$} & \textbf{\#P} & \textbf{FLOPs} & \textbf{Acc.} \\
\midrule
$4$ & 22 & 4.5 & 81.8 \\
$8$ & 22 & 4.6 & 82.0 \\
\rowcolor{VisionHOPEGreen}
\textbf{$16$} & 22 & 4.7 & 82.3 \\
$32$ & 22 & 5.3 & 82.2 \\
\bottomrule
\end{tabular}
}
\end{table}

\paragraph{Effect of MESA.}
Table~\ref{tab:visionhope_mesa_ablation} compares hierarchical VisionHOPE models with MILA and H-ViT$^3$ for image classification under training settings with and without MESA. MESA improves Top-1 accuracy for all compared model families. VisionHOPE obtains gains comparable to those of H-ViT$^3$ and achieves higher accuracy than both baselines at every scale under both settings. These results show that its advantage over the compared ViT and TTT models persists without MESA.

\begin{table}[t]
\centering
\caption{ImageNet-1K classification of hierarchical models with and without MESA. The panels correspond to Tiny, Small, and Base models from left to right. The results for MILA and H-ViT$^3$ are taken from their respective papers~\citep{NEURIPS2024_e618724a,Han_2026_CVPR}. Top-1 accuracy is reported in \%.}
\label{tab:visionhope_mesa_ablation}
\footnotesize
\setlength{\tabcolsep}{2pt}
\renewcommand{\arraystretch}{1.03}
\belowrulesep=0pt
\aboverulesep=0pt
\begin{minipage}[t]{0.327\linewidth}
\centering
\begin{tabular}{lc}
\toprule
\textbf{Method} & \textbf{Acc.} \\
\midrule
MILA-T (w/o MESA) & 83.3 \\
MILA-T (w/ MESA) & 83.5 \\
H-ViT$^3$-T (w/o MESA) & 83.5 \\
H-ViT$^3$-T (w/ MESA) & 84.0 \\
\rowcolor{VisionHOPEGreen}
VisionHOPE-T (w/o MESA) & 83.7 \\
\rowcolor{VisionHOPEGreen}
VisionHOPE-T (w/ MESA) & 84.1 \\
\bottomrule
\end{tabular}
\end{minipage}\hfill
\begin{minipage}[t]{0.327\linewidth}
\centering
\begin{tabular}{lc}
\toprule
\textbf{Method} & \textbf{Acc.} \\
\midrule
MILA-S (w/o MESA) & 84.2 \\
MILA-S (w/ MESA) & 84.4 \\
H-ViT$^3$-S (w/o MESA) & 84.4 \\
H-ViT$^3$-S (w/ MESA) & 84.9 \\
\rowcolor{VisionHOPEGreen}
VisionHOPE-S (w/o MESA) & 84.7 \\
\rowcolor{VisionHOPEGreen}
VisionHOPE-S (w/ MESA) & 85.2 \\
\bottomrule
\end{tabular}
\end{minipage}\hfill
\begin{minipage}[t]{0.327\linewidth}
\centering
\begin{tabular}{lc}
\toprule
\textbf{Method} & \textbf{Acc.} \\
\midrule
MILA-B (w/o MESA) & 85.0 \\
MILA-B (w/ MESA) & 85.3 \\
H-ViT$^3$-B (w/o MESA) & 84.9 \\
H-ViT$^3$-B (w/ MESA) & 85.5 \\
\rowcolor{VisionHOPEGreen}
VisionHOPE-B (w/o MESA) & 85.2 \\
\rowcolor{VisionHOPEGreen}
VisionHOPE-B (w/ MESA) & 85.6 \\
\bottomrule
\end{tabular}
\end{minipage}
\end{table}

\begin{figure}[t]
\centering
\includegraphics[width=0.99\linewidth]{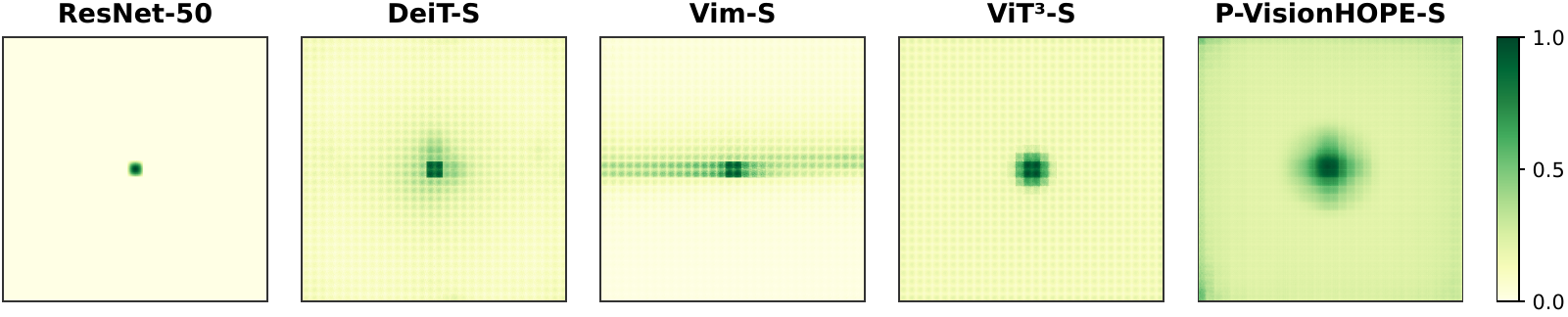}
\caption{Qualitative comparison of global perception through Effective Receptive Fields (ERFs) at $512\times512$ resolution. The maps are computed from the geometrically centered $2\times2$ features at the native output of the second processing block and averaged across the complete ImageNet-1K validation set. Darker colors indicate stronger relative influence on the centered representation.}
\label{fig:visionhope_global_erf}
\end{figure}

\subsection{Visual Analysis}
\label{app:visual_analysis}
\paragraph{Global perception.}
The Effective Receptive Field (ERF) of an output representation measures the input region that influences its response, with each pixel magnitude indicating the strength of that influence~\citep{NIPS2016_c8067ad1}. To compare global perception across visual backbones, we evaluate ImageNet-1K-pretrained models at $512\times512$ resolution and select the geometrically centered $2\times2$ features from the native output of the second processing block. We sum their positive activations, backpropagate the resulting scalar to the input, retain positive input gradients, sum across RGB channels, and average the maps across the full 50,000-image validation set.
Figure~\ref{fig:visionhope_global_erf} shows how strongly different input regions influence the centered representation, with darker colors indicating larger relative responses. ResNet-50~\citep{He_2016_CVPR} exhibits a localized receptive field, DeiT-S and ViT$^3$-S remain more concentrated around the center, and Vim-S presents a strongly directional pattern. In contrast, P-VisionHOPE-S distributes its influence broadly across the two-dimensional input while retaining a clear central response. This indicates stronger global perception in VisionHOPE.

\begin{figure}[t]
\centering
\includegraphics[width=0.99\linewidth]{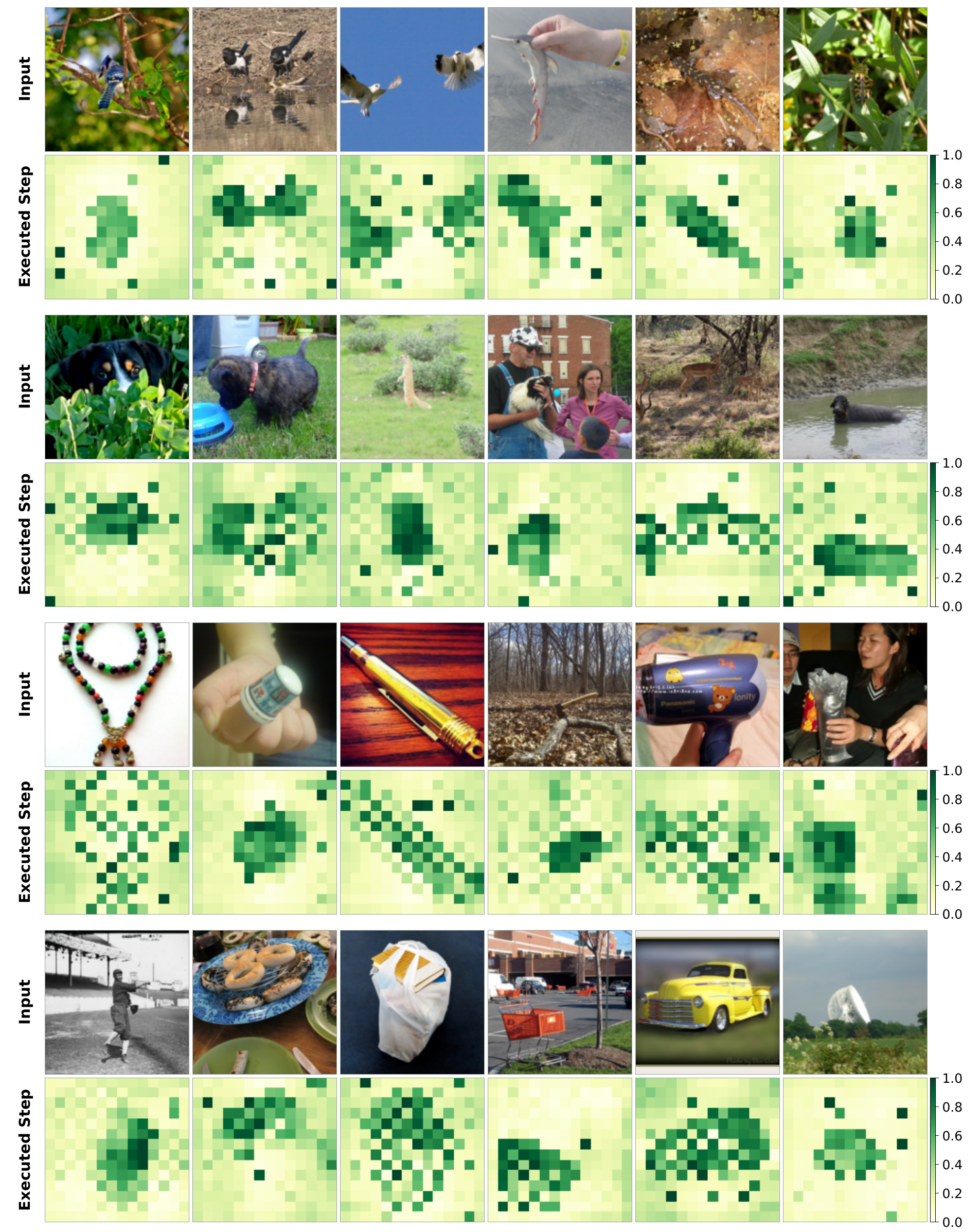}
\caption{Spatial variation of the final executed step in P-VisionHOPE-S across several ImageNet-1K validation images. Each two-row block pairs input images with the corresponding $14\times14$ maps of $\widetilde\eta_t$ from the final VisionHOPE block, restored to image coordinates and averaged across heads and four scan directions. Darker colors indicate relatively larger executed steps within each image.}
\label{fig:visionhope_executed_step_maps}
\end{figure}

\paragraph{Within-image learning dynamics.}
To examine how VisionHOPE modulates its within-image learning dynamics across spatial locations, we visualize the final executed step $\widetilde\eta_t$ from Equation~(\ref{eq:visionhope_spectral_clamp}) in the final block of P-VisionHOPE-S. For each ImageNet-1K validation image, the token-wise values from the four directional scan orders are restored to the common $14\times14$ feature grid and averaged across heads and directions. 
Figure~\ref{fig:visionhope_executed_step_maps} shows that the executed step forms structured, content-dependent patterns across spatial locations. Darker regions represent relatively larger values of $\widetilde\eta_t$ and therefore a stronger tendency to modify the coupled memories at the corresponding locations. These regions frequently align with prominent objects and distinctive visual structures, suggesting that VisionHOPE learns semantically meaningful spatial variation in its within-image learning dynamics. The maps thus provide a direct and intuitive view of this adaptive process.

\section{Strengths \& Weaknesses}
\label{app:strengths_weaknesses}
\paragraph{Strengths.}
VisionHOPE unifies within-image adaptation of stored content, update representations, and learning dynamics through five coupled memories. Its stability-matched step-size control provides non-expansion guarantees for both token-wise and chunk-wise memory recurrences. The operator supports hierarchical and plain backbones with computation and activation memory that scale linearly with visual token count. Results on classification and dense prediction benchmarks demonstrate its effectiveness across tasks, while ablations further support the main design choices.

\paragraph{Weaknesses and future work.}
Our current evaluation focuses on general-purpose visual recognition. We have not yet evaluated VisionHOPE as a visual encoder in Multimodal Large Language Models (MLLMs). Its effectiveness on specialized downstream tasks in medical imaging and remote sensing also remains to be empirically validated. We plan to extend VisionHOPE to these settings and assess its performance in both multimodal understanding and domain-specific visual tasks.

\end{document}